\newtheorem{definition}{Definition}
\newtheorem{theorem}{Theorem}
\newtheorem{corollary}{Corollary}
\newtheorem{proposition}{Proposition}
\newcolumntype{L}[1]{>{\raggedright\arraybackslash}p{#1}}
\newcommand{\ceil}[1]{\left\lceil #1 \right\rceil}
\newcommand{\FairRep}{$\texttt{CausalPre}$}
\newcommand{\FairRepAd}{\texttt{CausalPre+}}
\def\BibTeX{{\rm B\kern-.05em{\sc i\kern-.025em b}\kern-.08em
    T\kern-.1667em\lower.7ex\hbox{E}\kern-.125emX}}
\begin{document}

\title{\FairRep: Scalable and Effective Data Pre-Processing for Causal Fairness}

\author{
\IEEEauthorblockN{Ying Zheng}
\IEEEauthorblockA{\textit{National University of Singapore} \\
zheng.ying@u.nus.edu}
\and
\IEEEauthorblockN{Yangfan Jiang}
\IEEEauthorblockA{\textit{National University of Singapore} \\
jyangfan@u.nus.edu}
\and
\IEEEauthorblockN{Kian-Lee Tan}
\IEEEauthorblockA{\textit{National University of Singapore} \\
tankl@comp.nus.edu.sg}
}

\maketitle

\begin{abstract}

Causal fairness in databases is crucial to preventing biased and inaccurate outcomes in downstream tasks. While most prior work assumes a known causal model, recent efforts relax this assumption by enforcing additional constraints. However, these approaches often fail to capture broader attribute relationships that are critical to maintaining utility. This raises a fundamental question: \emph{Can we harness the benefits of causal reasoning to design efficient and effective fairness solutions without relying on strong assumptions about the underlying causal model?} 

In this paper, we seek to answer this question by introducing \FairRep, a scalable and effective causality-guided data pre-processing framework that guarantees \emph{justifiable fairness}, a strong causal notion of fairness. \FairRep\ extracts causally fair relationships by reformulating the originally complex and computationally infeasible extraction task into a tailored distribution estimation problem. To ensure scalability, \FairRep\ adopts a carefully crafted variant of low-dimensional marginal factorization to approximate the joint distribution, complemented by a heuristic algorithm that efficiently tackles the associated computational challenge. Extensive experiments on benchmark datasets demonstrate that \FairRep\ is both effective and scalable, challenging the conventional belief that achieving causal fairness requires trading off relationship coverage for relaxed model assumptions.

\end{abstract}

\section{Introduction} \label{sec: intro}

Machine learning (ML) systems are increasingly integrated into decision-making processes in domains such as education~\cite{admission1975}, finance~\cite{finance2007}, employment~\cite{amazon2022}, advertising~\cite{recommend2022}, and law enforcement~\cite{policing2017, crime2009}. While these systems offer efficiency and scalability, they also pose serious concerns about fairness~\cite{lin2024mitigating, zhang2025efficient, yang2024noninvasive, zheng2024fairgen, pradhan2022interpretable, surve2025explaining, tsioutsiouliklis2021fairness, tae2024falcon}. In particular, their reliance on historical data can unintentionally amplify biases, producing inaccurate, discriminatory outcomes with severe real-world impacts in high-stakes areas like criminal justice.

These concerns have motivated the development of fairness-aware data pre-processing techniques within database management systems (DBMS)~\cite{pirhadi2024otclean, islam2022through, nargesian2021tailoring, salimi2019interventional, lahoti2019operationalizing, shahbazi2024fairness, zhou2025intervention, azzalini2022efairdb}. Compared to traditional fairness interventions at the model training or inference stages~\cite{mehrabi2021asurvey, jiang2019wasserstein, cong2024fairsample, agarwal2018areductions, chowdhury2024enhancing, zafar2017fairness}, pre-processing methods offer: (i)~a once-for-all benefit, meaning that once data is calibrated for fairness, it can be used in any downstream task, regardless of the ML model employed; and (ii)~a user-friendly workflow, as fairness considerations are directly embedded into the data pre-processing pipeline, enabling practitioners to focus on the downstream task without specialized fairness expertise. 

A straightforward approach to achieve this is to remove all sensitive attributes (e.g., gender and race) from the training data. However, such ad hoc solutions often fail in practice, as non-sensitive attributes may act as proxies for sensitive ones, particularly when strong correlations exist~\cite{kusner2017counterfactual, salimi2019interventional}. This implicit leakage of sensitive information creates further challenges for DBMS, making it difficult to trace or diagnose the sources of discrimination in downstream tasks~\cite{salimi2019interventional}. 

To address this issue, researchers have developed rigorous and principled definitions of fairness~\cite{pessach2023areview, mehrabi2021asurvey, islam2022through}, with a focus on understanding how sensitive attributes influence decision-making processes. Among them, \emph{causal fairness}~\cite{salimi2019interventional,pirhadi2024otclean,pujol2023prefair} has gained traction by explicitly modeling causal pathways through which biases may propagate. 

\begin{figure*}
    \centering
    \includegraphics[width=0.85\linewidth]{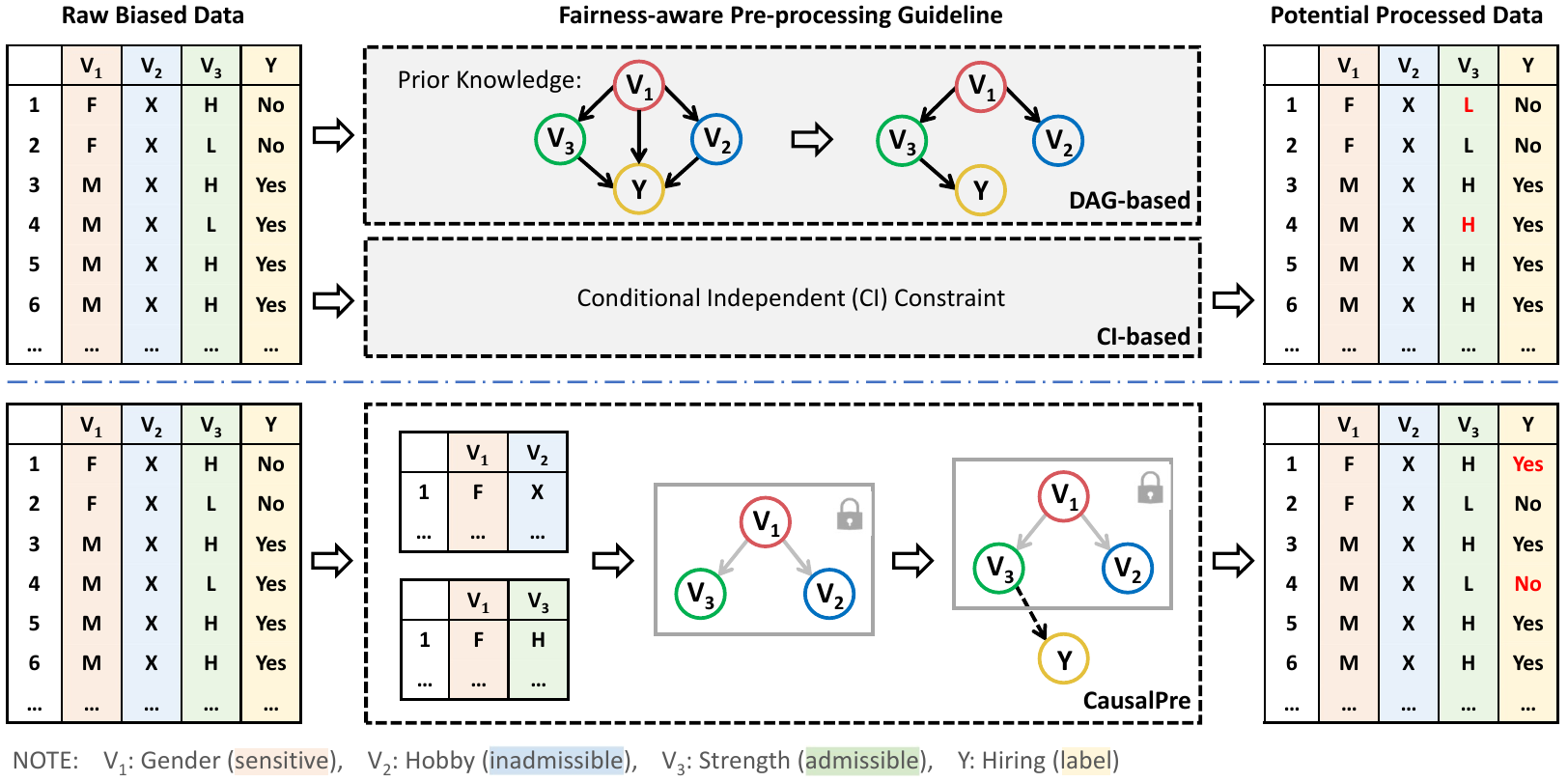}
    \caption{High-level comparison of fairness-aware pre-processing strategies on the biased manual labor hiring dataset. The top-half of the figure shows existing DAG-based or CI-based schemes, while the bottom-half illustrates {\FairRep}.}
    \label{fig: core_idea}
\end{figure*}

\subsection{Motivation}\label{subsec: intro-motivation}

Existing causal fairness-aware solutions~\cite{chiappa2019path, kilbertus2017avoiding, li2025local, zuo2024interventional, xu2019achieving} typically rely on a predefined causal structure (e.g., the graph in Figure~\ref{fig: core_idea}'s prior knowledge) to guide fairness interventions. However, such structures are data-specific and often unknown in advance. Moreover, building a causal graph from scratch, e.g., using causal discovery methods \cite{chickering1995learning,neapolitan2004learning,glymour2019review,pearl2009causality}, can be computationally intensive. This limitation is especially problematic for applications (e.g., analyzing user behavior in niche mobile apps), where fairness is a critical concern but explicit causal relationships among attributes are difficult to specify. 

To address this, recent efforts have sought to relax the reliance on predefined causal structures by enforcing additional constraints. One of such causal fairness notions, \emph{justifiable fairness}~\cite{salimi2019interventional}, requires that the effect of sensitive attributes on predictions be mediated only through {admissible} attributes, which are deemed relevant and fair for decision-making. To guarantee justifiable fairness, it is sufficient to pre-process the dataset with conditional independence (CI) constraints \cite{salimi2019interventional,pirhadi2024otclean}. However, a major issue of these approaches is their \emph{exclusive} focus on enforcing CI constraints, often at the expense of preserving data utility. In particular, optimizing solely for CI enforcement may significantly disrupt the underlying statistical properties of the input data, potentially leading to out-of-distribution issues in downstream tasks. In addition, these methods typically neglect the preservation of intra-record attribute relationships, resulting in nonsensical or implausible record instantiations that undermine data utility. Moreover, it is computationally expensive to enforce CI constraints, especially for datasets with rich attribute sets and large domain sizes, which limits the scalability of these solutions.

\subsection{Our Contributions} \label{subsec: intro-contri}

In this paper, we present {\FairRep}, a scalable and effective data pre-processing framework for algorithmic fairness. \FairRep\ offers formal fairness guarantees while scaling to datasets with complex attribute spaces, without any predefined causal structure. This is a particularly important advantage in real-world scenarios, where datasets often contain rich and diverse attributes and the underlying causal models are rarely known or available. To achieve this, {\FairRep} introduces a new design paradigm that differs fundamentally from previous approaches: instead of explicitly constructing full causal graphs or solely enforcing CI constraints, {\FairRep} infers causally fair relationships directly from the data and uses these to sanitize the dataset, thereby ensuring fairness while preserving the general statistical properties of the original data.

\vspace{2mm} \noindent\textbf{Challenges.} 
\FairRep\ focuses on tackling two challenges inherent in developing fairness-aware pre-processing techniques without pre-defined causal structure, while still preserving statistical fidelity. First, causal fairness notions are inherently defined and interpreted with respect to a causal graph. Without an underlying graph, it becomes non-trivial to infer causally fair relationships among attributes. Yet, such relationships are essential for ensuring and validating fairness guarantees. Second, the goal of a fairness-aware data pre-processing scheme is to ensure fairness without compromising utility. However, overly modifying data can harm utility, while preserving utility may undermine fairness. Striking this balance, especially at scale, poses major design challenges.

\vspace{2mm} \noindent\textbf{Solution overview.}
To address these challenges, we first conduct an in-depth analysis of the objective of causal-fairness-aware pre-processing under utility considerations. Our analysis yields a theoretical refinement that reinterprets the goal of fairness-aware data pre-processing as a tailored statistical property estimation problem that implicitly encodes fairness requirements. 
Specifically, the task seeks to approximate the data-generating distribution in a hypothetical fair world, where sensitive attributes influence outcomes only through admissible pathways. This formulation provides a principled alternative to prior approaches by bypassing the need for an explicit causal graph, eliminating causal structural details that are irrelevant to the fairness objective.

While this refinement significantly reduces the complexity of designing causal fairness solutions, practical instantiation remains challenging. On the one hand, directly estimating the full joint distribution is computationally infeasible due to its exponential dependence on dimensionality. On the other hand, naive marginal-based approximations may overlook important inter-attribute relationships, resulting in implausible records and degraded downstream task utility. To address this, we develop a carefully crafted variant of the low-dimensional marginal-based approximation method that decomposes the high-dimensional joint distribution into a series of smaller, more manageable marginals, each defined over a carefully selected subset of attributes. Specifically, these subsets are chosen to capture high mutual information while satisfying structural constraints such as size limits and overlap requirements. We formalize the marginal selection process as a constrained clique generation problem on a weighted complete graph, with the objective of identifying cliques that jointly maximize intra-clique dependencies under given constraints.

A practical concern in implementing this process is the computational hardness of the clique generation problem, which renders exact solutions infeasible at scale. To overcome this limitation, we further design a heuristic that efficiently produces high-quality clique sets, enabling scalable and effective approximation of the fair data distribution. By integrating these technical components, {\FairRep} efficiently pre-processes the input dataset to ensure that its empirical distribution aligns with the target fair distribution. 

We conduct extensive experiments on multiple benchmarks using various ML models, including logistic regression, random forest, and neural networks. The results demonstrate that {\FairRep} achieves strong causal fairness while preserving high utility across all evaluated tasks.

\vspace{2mm} \noindent\textbf{Motivating example.}
To further illustrate the intuition behind the core idea and effect of \FairRep, consider a manual labor hiring dataset in Figure~\ref{fig: core_idea}, which reveals a biased hiring pattern: males are more likely to be hired than females, regardless of their physical strength. Unlike previous solutions that either rely on prior knowledge of causal structure or exclusively enforce CI constraints, \FairRep\ extracts causally fair relationships directly from data to guide pre-processing. Specifically, it begins by decomposing the dataset column-wise into carefully chosen low-dimensional subsets, and then uses these subsets to approximate the joint relationship that should remain unchanged during processing. Building on this, \FairRep\ then identifies fairness-aware decision-making factors, enabling the recovery of complete causally fair relationships.  These relationships act as if a fair attribute graph were available, providing principled guidance for fairness-aware processing.

The rightmost part of Figure~\ref{fig: core_idea} illustrates the resulting modifications, with altered entries highlighted in red. The upper example, which is typically generated by CI-based pre-processing methods, distorts the statistical relationship between gender and strength, misleadingly implying that all females have low strength and all males have high strength. In contrast, the lower example produced by \FairRep\ avoids such distortions: it preserves the realistic variability of strength across genders while correcting the hiring bias.

\section{Preliminaries} \label{sec: pre}

This section first introduces the concept of causal directed acyclic graphs (DAGs), and then presents the fairness notions and information measures used in this paper.

\subsection{Causal DAGs} \label{subsec: preliminary-dag}

Given attributes $\mathcal{V}=\{V_1,\dots,V_d\}$ and directed edges $\mathcal{E}\subseteq \mathcal{V}\times \mathcal{V}$, a causal DAG is $\mathcal{G}=(\mathcal{V},\mathcal{E})$, where each $(V_i,V_j)\in\mathcal{E}$ denotes a direct causal influence from $V_i$ to $V_j$ and the graph contains no directed cycles.

Causal DAGs provide the foundation for causal fairness, which is defined through the notion of \textit{interventions} and formally expressed by the \textit{do}-operator~\cite{pearl2009causality}. An intervention $\text{\textit{do}}(X{=}x)$ enforces $X$ to take value $x$ by removing all incoming edges to $X$, and the resulting distribution $\mathbb{P}[O \mid \text{\textit{do}}(X{=}x)]$ captures the causal effect of $X$ on $O$.

Beyond defining causal fairness notions, causal DAGs also provide unique advantages for modeling and decomposing the joint distribution $\mathbb{P}[\mathcal{V}]$. Let $\Pi_i$ denote the parent set of $V_i$. According to the \emph{local Markov property}~\cite{pearl2009causality}, each attribute $V_i$ is conditionally independent of its non-descendants given its parents $\Pi_i$, and thus the joint distribution factorizes as
\begin{align} \label{eq: bayes}
    \mathbb{P[\mathcal{V}]} = \mathbb{P}[V_1, \dots, V_d] = \prod_{i=1}^{d} \mathbb{P}[V_i \mid \Pi_i].
\end{align}

Finally, dependencies in a DAG are characterized by \emph{d-separation}~\cite{pearl2009causality}. Two sets of attributes $X$ and $Y$ are d-separated by $Z$ if every path between $X$ and $Y$ is blocked by $Z$; in this case, the conditional independence $\left(X \perp \!\!\! \perp_d Y \mid Z\right)$ holds. A distribution $\mathbb{P}$ is said to be \textit{Markov compatible} with a DAG $\mathcal{G}$ if every d-separation in $\mathcal{G}$ implies a corresponding conditional independence in $\mathbb{P}$, and it is \textit{faithful} if the converse holds. As in previous work on causal fairness~\cite{salimi2019interventional, galhotra2022causal, pujol2023prefair}, we adopt the general assumption that the dataset distribution is Markov compatible with and faithful to the underlying attribute graph.

\subsection{Causal Fairness} \label{subsec: preliminary-fairness}

Let $\mathcal{M}: \text{Dom}(\mathcal{X})\mapsto\text{Dom}(O)$ denote the classifier that maps input features $\mathcal{X}$ to an outcome $O$, where Dom($\cdot$) denotes the domain. Let $\mathcal{S}\subseteq\mathcal{X}$ denote the set of sensitive attributes. We now introduce the notion of causal fairness as follows.

\begin{definition}[$\mathcal{K}$-fair~\cite{salimi2019interventional}] \label{def: kfair}
    Given a subset $\mathcal{K} \subseteq \mathcal{X} \setminus\{\mathcal{S}\}$, we say that a classifier $\mathcal{M}:\text{Dom}(\mathcal{X})\mapsto\text{Dom}(O)$ is \emph{$\mathcal{K}$-fair} with respect to sensitive attributes $\mathcal{S}$ if, for any instantiation $\mathcal{K} = \kappa$, the following holds:
    \begin{align} \label{eq: fair_def}
        &\mathbb{P}[O = o \mid \text{do}(\mathcal{S} = \text{\scriptsize$\mathcal{S}$}_0), \text{do}(\mathcal{K} = \text{\scriptsize$\mathcal{K}$})] \notag \\
        = &\mathbb{P}[O = o \mid \text{do}(\mathcal{S} = \text{\scriptsize$\mathcal{S}$}_1), \text{do}(\mathcal{K} = \text{\scriptsize$\mathcal{K}$})].
    \end{align}
\end{definition}

\begin{definition}[Justifiable Fairness~\cite{salimi2019interventional}] \label{def: justifiable}
    Let $\mathcal{A} \subseteq \mathcal{X}$ denote the set of \emph{admissible attributes} that are allowed to influence the outcome despite their causal link to sensitive attributes. A classifier $\mathcal{M}$ is said to be \textit{justifiably fair} if it is $\mathcal{K}$-fair for every superset $\mathcal{K}$ such that $\mathcal{A} \subseteq \mathcal{K} \subseteq \mathcal{X}$.
\end{definition}

Note that justifiable fairness is defined in terms of the classifier's outcome. To distinguish the effect of data pre-processing from model training, we follow prior work~\cite{salimi2019interventional, galhotra2022causal} and assume that the classifier is \emph{reasonable}, i.e., it closely approximates the underlying data distribution it was trained on. Specifically, given a dataset with attributes $\mathcal{X}\cup\{Y\}$, where $Y$ denotes the ground truth label, a classifier is reasonable if $\mathbb{P}[Y=y\mid\mathcal{X}=x] \approx \mathbb{P}[O=y\mid\mathcal{X}=x], \forall y\in\text{Dom}(O)$. Hereafter, we slightly abuse the notation and use the outcome variable $O$ and label variable $Y$ interchangeably. 

To further illustrate the concept of justifiable fairness, consider the two DAGs in the ``DAG-based'' block of Figure~\ref{fig: core_idea}. Suppose we have a reasonable classifier represented by the mapping rule in the left DAG. In this setting, $\mathcal{S}{=}\{V_1\}$, $\mathcal{A}{=}\{V_3\}$, $O{=}Y$, and $\mathcal{K}{=}\{V_3\}$ or $\{V_2, V_3\}$. Now consider the case $\mathcal{K}{=}\{V_3\}$. Applying the intervention $\text{do}(\mathcal{K} {=} \text{\scriptsize$\mathcal{K}$})$ removes the edges incoming to $\mathcal{K}$, specifically $V_1 {\rightarrow} V_3$. Even after this intervention, there remains an active causal path (i.e., $V_1 {\rightarrow} V_2 {\rightarrow} Y$) from $\mathcal{S}$ to $O$. As a result, the causal effect of the sensitive attribute on the decision persists, Equation~(\ref{eq: fair_def}) is not satisfied, and $\mathcal{K}$-fairness is violated; thus, the classifier is justifiably unfair. In contrast, if both paths $V_1 {\rightarrow} Y$ and $V_1 {\rightarrow} V_2 {\rightarrow} Y$ are blocked, as shown in the right DAG, then for any choice of $\mathcal{K}$, there exists no causal path and therefore no causal effect from $\mathcal{S}$ to $O$. In this case, the classifier is $\mathcal{K}$-fair for all possible $\mathcal{K}$ and is therefore justifiably fair.

Since the \textit{do}-operator is defined over a causal DAG, justifiable fairness admits an intuitive graphical interpretation:
\begin{theorem}[\cite{salimi2019interventional}] \label{theorem: justifiable}
    Given a causal DAG $\mathcal{G}$ over attributes $\mathcal{V}$, if every directed path from any sensitive attribute in $\mathcal{S}$ to the outcome attribute $O$ contains at least one admissible attribute in $\mathcal{A}$, then the corresponding classifier $\mathcal{M}$ is justifiably fair.
\end{theorem}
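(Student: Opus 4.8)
The plan is to unfold the definition of justifiable fairness and show that the path-covering hypothesis forces every sensitive-to-outcome influence to be severed the moment we intervene on any admissible-containing set $\mathcal{K}$. Concretely, I would fix an arbitrary $\mathcal{K}$ with $\mathcal{A}\subseteq\mathcal{K}\subseteq\mathcal{X}$ and establish the $\mathcal{K}$-fairness identity of Equation~(\ref{eq: fair_def}); since $\mathcal{K}$ is arbitrary, Definition~\ref{def: justifiable} then yields justifiable fairness. The entire argument thus reduces to one statement: under the joint intervention $\mathrm{do}(\mathcal{S}{=}s),\mathrm{do}(\mathcal{K}{=}\kappa)$, the interventional law of $O$ does not depend on the value $s$ assigned to the sensitive attributes.

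To prove this, I would work in the mutilated graph $\mathcal{G}'$ obtained from $\mathcal{G}$ by deleting every edge incoming to $\mathcal{S}$ and to $\mathcal{K}$, which is precisely the graph induced by the two do-operators. The first key step is a purely graph-theoretic claim: in $\mathcal{G}'$ no directed path runs from any node of $\mathcal{S}$ to $O$. Indeed, take any directed path $P$ from $\mathcal{S}$ to $O$ in the original $\mathcal{G}$; by hypothesis $P$ visits some admissible attribute $A\in\mathcal{A}\subseteq\mathcal{K}$, and since admissible attributes are non-sensitive ($\mathcal{A}\cap\mathcal{S}=\emptyset$), $A$ is not the source of $P$, so $P$ carries an edge entering $A$. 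That edge is incoming to $\mathcal{K}$ and is therefore absent in $\mathcal{G}'$, breaking $P$. As every candidate path is broken, $\mathcal{S}$ cannot reach $O$ in $\mathcal{G}'$.

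The second step converts this reachability fact into the desired distributional equality through the d-separation and Markov-compatibility machinery of Section~\ref{subsec: preliminary-dag}. Because $\mathrm{do}(\mathcal{S})$ deletes all edges into $\mathcal{S}$, every node of $\mathcal{S}$ is a root of $\mathcal{G}'$, so any path leaving $\mathcal{S}$ must begin with an outgoing edge. Together with step one this forces every $\mathcal{S}$-to-$O$ path in $\mathcal{G}'$ to contain a collider, as a collider-free path starting with an out-edge would necessarily be directed, which we have ruled out. Conditioning on the empty set, each such collider-blocked path is inactive, so $\mathcal{S}$ and $O$ are d-separated in $\mathcal{G}'$. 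Since the interventional distribution is Markov compatible with $\mathcal{G}'$, the quantity $\mathbb{P}[O \mid \mathrm{do}(\mathcal{S}{=}s),\mathrm{do}(\mathcal{K}{=}\kappa)]$ is independent of $s$, which is exactly Equation~(\ref{eq: fair_def}). Identifying $O$ with $Y$ via the reasonable-classifier assumption then transfers the conclusion from the data distribution to the classifier $\mathcal{M}$.

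I expect the main obstacle to be the upgrade in the second step from ``no directed path'' to genuine d-separation: one must rule out active \emph{undirected} connecting paths, for instance through shared causes, and this is precisely where the fact that $\mathrm{do}(\mathcal{S})$ renders $\mathcal{S}$ a root becomes indispensable, since it guarantees that every escaping path starts with an out-edge and hence must bend at a collider. A secondary technical point is the corner case in the first step in which the blocking admissible node would coincide with the path's source; this is excluded by the disjointness $\mathcal{A}\cap\mathcal{S}=\emptyset$ implicit in the definitions, which ensures the admissible node lies strictly interior to the path and thus contributes an incoming edge available for deletion.
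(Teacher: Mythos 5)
The paper does not prove this theorem itself---it is imported from Salimi et al.---but your argument is correct and follows the same path-blocking reasoning the paper sketches for the closely related Proposition~\ref{prop: fair}: every directed $\mathcal{S}$-to-$O$ path must pass through some $A\in\mathcal{A}\subseteq\mathcal{K}$ as an interior node (interior because $\mathcal{A}\cap\mathcal{S}=\varnothing$), so $\text{do}(\mathcal{K}=\kappa)$ deletes the edge entering $A$ and severs the path. Your extra step upgrading ``no directed path'' to genuine d-separation via the observation that $\mathcal{S}$ becomes a root and any surviving connecting path must therefore bend at a collider is a valid and welcome tightening of what the paper leaves implicit.
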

Under the assumption of a reasonable classifier, Theorem~\ref{theorem: justifiable} immediately yields the following corollary, which recasts the result from the perspective of the data and forms the theoretical basis for causally fair data pre-processing.
Further discussion is provided in Appendix~\ref{appendix: diff-thm-cor}.
\begin{corollary}\label{cor: pre-justifiable-fairness}
    Let $\mathcal{G}$ be the attribute graph of a dataset. If every causal pathway in $\mathcal{G}$ that goes from sensitive attributes to label attribute includes at least one admissible attribute, then any reasonable classifier trained on this dataset satisfies justifiable fairness.
\end{corollary}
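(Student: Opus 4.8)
The plan is to reduce the corollary to Theorem~\ref{theorem: justifiable}, using the reasonable-classifier assumption as the bridge between the dataset's label attribute $Y$ and a trained classifier's outcome $O$. Theorem~\ref{theorem: justifiable} is phrased in terms of the path structure reaching the \emph{outcome} $O$ in a causal DAG, whereas the corollary speaks about the \emph{label} $Y$ in the attribute graph of the data. So the crux is to show that, for a reasonable classifier, these two viewpoints coincide at the level of causal path structure, after which the theorem applies verbatim.

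First I would set up an augmented graph. Let $\mathcal{G}$ be the attribute graph over $\mathcal{V} = \mathcal{X} \cup \{Y\}$, and let $\Pi_Y$ be the parent set of $Y$. By the local Markov property underlying Equation~\eqref{eq: bayes}, the label satisfies $\mathbb{P}[Y \mid \mathcal{X}] = \mathbb{P}[Y \mid \Pi_Y]$. Now introduce the classifier output $O$ as a fresh node. The reasonable-classifier assumption gives $\mathbb{P}[O = y \mid \mathcal{X} = x] \approx \mathbb{P}[Y = y \mid \mathcal{X} = x]$ for all $y$, so $O$'s conditional law given $\mathcal{X}$ factors through the same set $\Pi_Y$. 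I would therefore attach $O$ to the graph as a child of $\Pi_Y$, obtaining a DAG $\mathcal{G}'$ in which $O$ replicates the incoming-edge structure of $Y$.

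Next I would transfer the path hypothesis. Since $O$ and $Y$ share the parent set $\Pi_Y$ in $\mathcal{G}'$, every directed path from a sensitive attribute $S \in \mathcal{S}$ to $O$ is obtained from a directed path from $S$ to $Y$ in $\mathcal{G}$ by swapping the terminal node $Y$ for $O$, with identical intermediate vertices. This yields a bijection between the $\mathcal{S}$-to-$Y$ paths in $\mathcal{G}$ and the $\mathcal{S}$-to-$O$ paths in $\mathcal{G}'$. By the corollary's hypothesis every $\mathcal{S}$-to-$Y$ pathway passes through at least one admissible attribute in $\mathcal{A}$, and because that attribute always lies among the shared intermediate vertices (never at the endpoint), every $\mathcal{S}$-to-$O$ path inherits the same property. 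Applying Theorem~\ref{theorem: justifiable} to $\mathcal{G}'$ then concludes that $\mathcal{M}$ is justifiably fair.

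The main obstacle is the step that upgrades the distributional equivalence $\mathbb{P}[O \mid \mathcal{X}] \approx \mathbb{P}[Y \mid \mathcal{X}]$ into a structural claim about causal parents, i.e.\ justifying that $O$ inherits precisely $\Pi_Y$ rather than depending spuriously on the remaining attributes in $\mathcal{X}$ that the classifier also reads as input. This is where I would invoke the faithfulness assumption from Section~\ref{subsec: preliminary-dag}: matching $Y$'s conditional law forces the conditional independence $O \perp (\mathcal{X}\setminus\Pi_Y) \mid \Pi_Y$, and faithfulness turns this independence into a genuine d-separation, ruling out extraneous edges into $O$. I would also need to handle the ``$\approx$'' with care, either by treating the reasonable classifier as exactly matching the conditional law for the purposes of this argument, or by noting that the approximation error affects only the magnitude of conditional probabilities and cannot create or destroy a directed path, which is the only graph-theoretic feature Theorem~\ref{theorem: justifiable} depends on.
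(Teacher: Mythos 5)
Your proposal is correct and follows essentially the same route as the paper: the paper treats Corollary~\ref{cor: pre-justifiable-fairness} as an immediate consequence of Theorem~\ref{theorem: justifiable} under the reasonable-classifier assumption, with Appendix~\ref{appendix: diff-thm-cor} only discussing the conceptual shift from classifier to data rather than giving a formal argument. Your augmented-graph construction identifying $O$ with $Y$ via the shared parent set $\Pi_Y$, and the path bijection that transfers the admissible-attribute hypothesis, is simply a more explicit writeup of that same reduction.
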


\subsection{Information Measures}

Next, we introduce key tools for measuring information~\cite{cover1999elements}, which quantify the divergence between raw data, pre-processed data, and the fair distribution in an ideal world.

The \textit{Kullback–Leibler (KL) divergence} quantifies how distribution $\mathbb{Q}$ diverges from $\mathbb{P}$:
\begin{align} \label{eq: kl}
    D_{KL}\left(\mathbb{P} \parallel \mathbb{Q}\right) 
    &= \sum_{x\in Dom(X)} \mathbb{P}[x]\log\left(\frac{\mathbb{P}[x]}{\mathbb{Q}[x]}\right),
\end{align}
where larger values indicate greater dissimilarity.

The \textit{entropy} of $X$ measures its uncertainty:
\begin{align} \label{eq: entropy}
    H(X)=-\sum_{x\in \text{Dom}(X)} \mathbb{P}[X=x] \log\mathbb{P}[X=x].
\end{align}

The \textit{mutual information (MI)} between $X$ and $Y$ captures the amount of information about $Y$ that can be learned by observing $X$:
\begin{align} \label{eq: mi}
    I(X; Y)=H(X)+H(Y)-H(XY).
\end{align}
For multiple variables $\mathcal{X}=\{X_1, \dots, X_n\}$,
\begin{align} \label{eq: mi_multi}
    I(\mathcal{X})=\sum_{\Gamma\subseteq\mathcal{X}} {(-1)^{|\Gamma|-1}H(\Gamma)},
\end{align}
which generalizes MI to quantify the total shared information.

\section{Problem Statement} \label{sec: problem}

Given a database instance $\mathcal{D}$ with $d$ attributes, we denote the full attribute set as $\mathcal{V} = \{V_1, \dots, V_{d-1}, Y\}$, where $Y$ is the label attribute. We further partition $\mathcal{V}$ into five disjoint subsets, $\mathcal{V} = \mathcal{S} \cup \mathcal{I} \cup \mathcal{A} \cup \mathcal{W} \cup \{Y\}$, where $\mathcal{S}$ denotes sensitive attributes (e.g., gender or race), $\mathcal{I}$ the inadmissible attributes that contain sensitive information and are thus excluded from decisions, $\mathcal{A}$ the admissible attributes whose influence is legitimate even if affected by $\mathcal{S}$, and $\mathcal{W}$ the additional attributes outside the above categories and irrelevant to sensitivity. Our objective is to design a data pre-processing framework that removes biased patterns embedded in $\mathcal{D}$. The pre-processed database, denoted by $\mathcal{D}'$, should satisfy two key properties: (i) any reasonable classifier trained on $\mathcal{D}'$ satisfies justifiable fairness as guaranteed by Corollary~\ref{cor: pre-justifiable-fairness}; and (ii) the predictive performance of downstream models trained on $\mathcal{D}'$ is effectively preserved. 

To achieve this, we modify the attribute values in individual records of $\mathcal{D}$ to calibrate its empirical distribution, ensuring that any influence from sensitive attributes on the label attribute is mediated solely through admissible attributes $\mathcal{A}$.

\section{\FairRep} \label{sec: method}

As outlined in Section \ref{sec: intro}, the core idea behind \FairRep\ is to extract causally fair relationships directly from data, and this extraction task can be reformulated as a tailored distribution estimation problem. Once these fair relationships are identified, the dataset can then be processed such that its empirical distribution aligns with that of a hypothetical fair world, in which sensitive attributes influence outcomes only through admissible pathways.

At a high level, \FairRep\ operates in two main steps. In \textbf{Step-1}, it identifies causally fair relationships, denoted by the distribution $\mathbb{P}_{\mathcal{G}'}$, with respect to a fair attribute graph $\mathcal{G}'$ defined over the attribute set $\mathcal{V} = \mathcal{S} \cup \mathcal{I} \cup \mathcal{A} \cup \mathcal{W} \cup \{Y\}$. A fair attribute graph is a specialized causal DAG that includes only causally fair pathways. In \textbf{Step-2}, it processes the database $\mathcal{D}$ to produce a modified database $\mathcal{D}'$ whose empirical distribution $\mathbb{P}[\mathcal{D}']$ aligns with $\mathbb{P}_{\mathcal{G}'}$, thereby enforcing causal fairness. Note that \FairRep\ does not assume prior knowledge of the underlying DAG over attributes; all causally fair relationships in \textbf{Step-1} are inferred from scratch.

Below, we first present a naive solution (Section~\ref{subsec: naive}) that underpins \FairRep. Section~\ref{subsec: rationale} explains the design rationale. Sections \ref{subsec: distestimation} and \ref{subsec: clique} detail the core components: a tailored marginal-based distribution estimator and a heuristic for efficient marginal selection. Section \ref{subsec: datarepair} presents the complete framework, and Section \ref{subsec: tradeoff} introduces a generalized variant that balances fairness and utility.

\subsection{Naive Approach} \label{subsec: naive}

A straightforward approach to capturing the causal relationships among attributes in a database is to construct an attribute graph directly from the data. Formally, we denote this graph by $\mathcal{G}$, with attribute set $\mathcal{V} {=} \{V_1, \dots, V_{d-1}, Y\}$. For each attribute $V_i$, let $\Pi_i$ denote its parent set in $\mathcal{G}$, and let $\Pi_Y$ denote the parent set of the label $Y$. Building on this notion, the process of identifying causally fair relationships in \textbf{Step-1} can be decomposed into three phases: \textbf{(Phase-A)} construct an initial, fairness-unaware attribute graph $\mathcal{G}$ (modeled as a DAG) from the database; \textbf{(Phase-B)} transform $\mathcal{G}$ into a fair DAG $\mathcal{G}'$ by pruning edges that introduce unfairness, with updated parent sets $\Pi_i'$ and $\Pi_Y'$; and \textbf{(Phase-C)} compute causally fair relationships quantitatively by analyzing the structural properties of $\mathcal{G}'$. The ultimate goal of this three-phase process is to derive the fair joint distribution $\mathbb{P}_\mathcal{G}'$, which factorizes as:
\begin{align} \label{eq: Pg'-1}
    \mathbb{P}_{\mathcal{G}'}
    &= \mathbb{P}[V_1, V_2, \dots, V_{d-1}, Y] 
    = \prod_{i=1}^{d-1}{\mathbb{P}[V_i\mid \Pi_i']}\cdot \mathbb{P}[Y\mid \Pi_Y'],
\end{align}
where the factorization follows from Equation~(\ref{eq: bayes}).

While the factorization appears straightforward, its computation hinges on obtaining the parent sets $\Pi_i'$ and $\Pi_Y'$, which in turn depends on the initial attribute graph mentioned in \textbf{Phase-A}. A common approach is to apply causal discovery methods such as Max-Min Hill Climbing (MMHC)~\cite{TsamardinosBA06}. However, the computational complexity of MMHC grows exponentially with the number of attributes~\cite{TsamardinosBA06}, creating a significant challenge in scalability.

\subsection{Refining the Fair DAG}  \label{subsec: rationale}

While deriving an exact representation of the fair DAG $\mathcal{G}'$ in the form of Equation~(\ref{eq: Pg'-1}) offers a general recipe for achieving causal fairness, it is not strictly necessary for ensuring \emph{justifiable fairness}. In particular, the following proposition, implied by previous work~\cite{salimi2019interventional, pujol2023prefair}, establishes a more specific condition under which justifiable fairness is guaranteed.
\begin{proposition} \label{prop: fair}
    Consider a database instance $\mathcal{D}$ and its corresponding attribute graph $\mathcal{G}$ defined over a set of attributes $\mathcal{V}{=}\mathcal{S}\cup\mathcal{I}\cup\mathcal{A}\cup\mathcal{W}\cup\mathcal{Y}$. If every directed edge in $\mathcal{G}$ that points to a label attribute in $\mathcal{Y}$ originates solely from attributes in $\mathcal{A}\cup\mathcal{W}$, then any reasonable classifier trained on such a dataset is justifiably fair. Specifically, this condition is satisfied if the parent set $\Pi$ of any label attribute in $\mathcal{Y}$ is a subset of $\mathcal{A}\cup\mathcal{W}$, that is, $\Pi \subseteq \mathcal{A}\cup\mathcal{W}$.
\end{proposition}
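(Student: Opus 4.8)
The plan is to reduce the claim to the graphical sufficient condition already packaged in Corollary~\ref{cor: pre-justifiable-fairness}, rather than reasoning about the $\text{do}$-operator from first principles. That corollary guarantees justifiable fairness for any reasonable classifier whenever \emph{every} directed causal pathway in $\mathcal{G}$ from a sensitive attribute to a label attribute passes through at least one admissible attribute in $\mathcal{A}$. I first note that the two sentences of the statement encode the same hypothesis: ``every edge pointing into a label $Y\in\mathcal{Y}$ originates in $\mathcal{A}\cup\mathcal{W}$'' is precisely the parent-set restatement $\Pi_Y\subseteq\mathcal{A}\cup\mathcal{W}$. Hence it suffices to verify, under $\Pi_Y\subseteq\mathcal{A}\cup\mathcal{W}$, that the pathway condition of Corollary~\ref{cor: pre-justifiable-fairness} holds for every label $Y\in\mathcal{Y}$.

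The core step is a local analysis of the last edge of any such pathway. Fix an arbitrary directed path $\pi: S=V_{i_0}\to V_{i_1}\to\cdots\to V_{i_k}=Y$ with $S\in\mathcal{S}$ and $Y\in\mathcal{Y}$, and consider its penultimate node $V_{i_{k-1}}$. Because the final edge $V_{i_{k-1}}\to Y$ points into the label, $V_{i_{k-1}}$ is a parent of $Y$, so $V_{i_{k-1}}\in\Pi_Y\subseteq\mathcal{A}\cup\mathcal{W}$. I would then argue by cases: if $V_{i_{k-1}}\in\mathcal{A}$, the path $\pi$ already contains an admissible attribute and the corollary's requirement is met along $\pi$; the remaining case $V_{i_{k-1}}\in\mathcal{W}$ must be excluded, since otherwise $\pi$ could reach $Y$ without crossing $\mathcal{A}$. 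As $\pi$ and $Y$ are arbitrary, ruling out this case for all such paths establishes the hypothesis of Corollary~\ref{cor: pre-justifiable-fairness} and hence justifiable fairness.

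The crux, and the step I expect to be the main obstacle, is making rigorous the exclusion of the case $V_{i_{k-1}}\in\mathcal{W}$. This is where the defining role of $\mathcal{W}$ enters: $\mathcal{W}$ collects the attributes declared \emph{irrelevant to sensitivity}, which I would formalize as the absence of any directed path in $\mathcal{G}$ from $\mathcal{S}$ into $\mathcal{W}$ (equivalently, no $W\in\mathcal{W}$ is a descendant of any $S\in\mathcal{S}$). Under this formalization the subpath $S=V_{i_0}\to\cdots\to V_{i_{k-1}}$ would constitute a directed $\mathcal{S}\to\mathcal{W}$ path, a contradiction; therefore $V_{i_{k-1}}\in\mathcal{A}$ necessarily. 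I would state this irrelevance property explicitly as the structural meaning of $\mathcal{W}$ and flag its necessity: without it, a sensitive attribute could influence $Y$ through a $\mathcal{W}$-parent while bypassing $\mathcal{A}$, and justifiable fairness could fail.

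Optionally, I might remark that the same conclusion follows directly from the intervention semantics of Definition~\ref{def: justifiable}: fixing $\mathcal{A}\subseteq\mathcal{K}$ via $\text{do}(\mathcal{K}=\kappa)$ pins every $\mathcal{A}$-parent of $Y$, while the $\mathcal{W}$-irrelevance assumption renders the distribution of any unfixed $\mathcal{W}$-parent invariant under $\text{do}(\mathcal{S}=\text{\scriptsize$\mathcal{S}$}_0)$ versus $\text{do}(\mathcal{S}=\text{\scriptsize$\mathcal{S}$}_1)$, so $\mathbb{P}[Y\mid\text{do}(\mathcal{S}),\text{do}(\mathcal{K})]$ becomes independent of the sensitive value and Equation~(\ref{eq: fair_def}) holds for all $\mathcal{K}\supseteq\mathcal{A}$. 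I would nonetheless prefer the graphical reduction above, since Corollary~\ref{cor: pre-justifiable-fairness} already discharges the $\text{do}$-calculus bookkeeping and keeps the argument short.
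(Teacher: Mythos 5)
Your proposal is correct and rests on exactly the same two facts as the paper's proof: the parents of $Y$ lie in $\mathcal{A}\cup\mathcal{W}$, and the defining ``irrelevant to sensitivity'' property of $\mathcal{W}$ rules out any directed path from $\mathcal{S}$ into $\mathcal{W}$, so every $\mathcal{S}\to Y$ path must pass through $\mathcal{A}$. The paper phrases this directly in terms of the $\text{do}(\mathcal{K})$ intervention blocking all such paths (essentially your ``optional remark''), whereas you route it through Corollary~\ref{cor: pre-justifiable-fairness}; this is only a presentational difference, and your explicit flagging of the $\mathcal{W}$-unreachability assumption is, if anything, slightly more careful than the paper's ``by definition.''
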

\begin{proof}[Proof Sketch]
    The proof follows from Theorem 3.5 in~\cite{salimi2019interventional}, which builds on the definitions of $\mathcal{K}$-fairness and justifiable fairness. The complete proof is deferred to Appendix~\ref{appendix: fair}.
\end{proof}

According to Proposition~\ref{prop: fair}, a fair attribute graph can be obtained by removing all directed edges of the form $X {\rightarrow} Y$, where $X \in \mathcal{S} \cup \mathcal{I}$ and $Y \in \mathcal{Y}$, while keeping all edges between non-label attributes unchanged. This directly leads to the following refined expression for $\mathbb{P}_{\mathcal{G}'}$:
\begin{align} \label{eq: Pg'}
    \mathbb{P}_{\mathcal{G}'}
    &= \mathbb{P}[V_1, V_2, \dots, V_{d-1}, Y] 
    = \prod_{i=1}^{d-1}{\mathbb{P}[V_i\mid \Pi_i]}\cdot \mathbb{P}[Y\mid \Pi_Y'] \notag \\
    &= \mathbb{P}[\mathcal{V}\setminus\{Y\}]\cdot \mathbb{P}[Y\mid \Pi_Y\setminus(\mathcal{S}\cup\mathcal{I})]. 
\end{align}

Comparing the above expression of $\mathbb{P}_{\mathcal{G}'}$ with the original data distribution induced by the initial attribute graph,
\begin{align} \label{eq: Pg}
    \mathbb{P}_{\mathcal{G}}
    = \prod_{i=1}^{d-1}{\mathbb{P}[V_i\mid \Pi_i]}\cdot \mathbb{P}[Y\mid \Pi_Y] 
    = \mathbb{P}[\mathcal{V}\setminus\{Y\}]\cdot \mathbb{P}[Y\mid \Pi_Y],
\end{align}
we observe that only the conditional distribution of $Y$ differs, changing from $\mathbb{P}[Y\mid \Pi_Y]$ to $\mathbb{P}[Y\mid \Pi_Y\setminus(\mathcal{S}\cup\mathcal{I})]$, while the joint distribution over attributes in $\mathcal{V} \setminus \{Y\}$ remains unchanged. This insight allows us to abstract away the internal dependencies among non-label attributes and treat them collectively as a black box. Accordingly, when using the DAG as a processing guideline, it suffices to focus on the simplified structure in Figure~\ref{fig: reduce}b, where the gray block represents the black-boxed relationships, rather than the full structure in Figure~\ref{fig: reduce}a. Within this reduced view, we only need to identify the parent set of the label attribute (highlighted in the dashed orange box) and eliminate directed edges from sensitive or inadmissible attributes to the label for fairness issues.

\begin{figure}[!t]
    \centering
    \includegraphics[width=\linewidth, keepaspectratio]{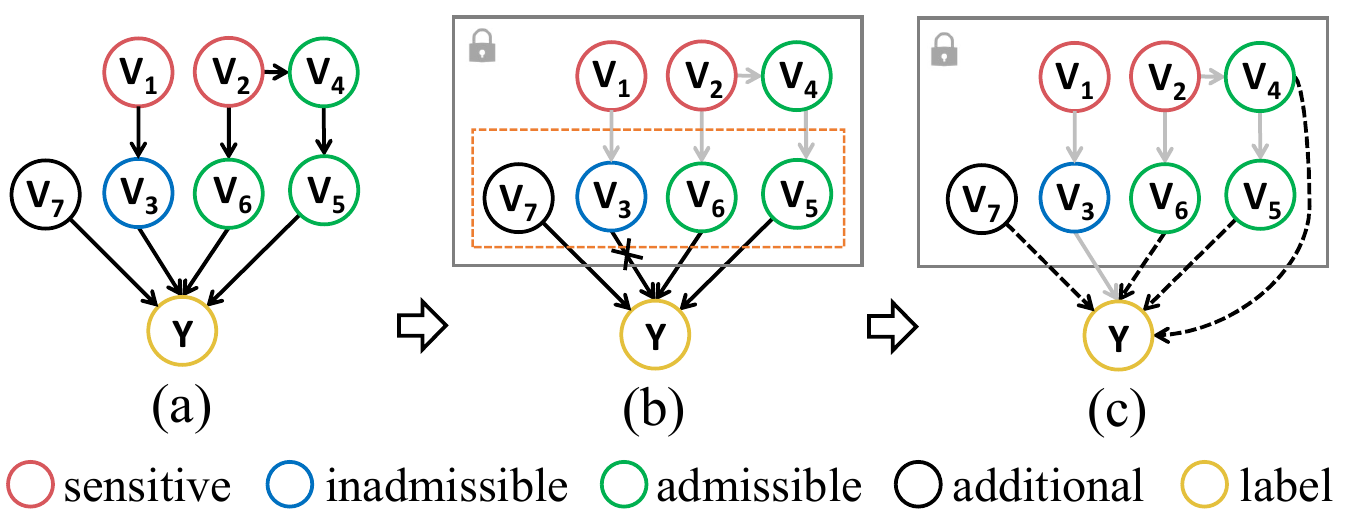}
    \caption{Refinement of the required structural guideline.}
    \label{fig: reduce}
\end{figure}

The formulation above, however, still requires access to the local structure around the label, which can also be costly to obtain, as discussed in Section~\ref{subsec: intro-motivation} and Section~\ref{subsec: naive}. To overcome this bottleneck, we present our solution, which effectively bypasses explicit DAG construction while preserving the informational utility of the database. In particular, we aim for the fair distribution $\mathbb{P}_{\mathcal{G}'}$ to remain as close as possible to the original data distribution $\mathbb{P}$. To quantify this closeness, we analyze the KL divergence between them, which in turn guides the derivation of a refined $\mathbb{P}_{\mathcal{G}'}$ better aligned with $\mathbb{P}$. By applying Equations~(\ref{eq: kl}), (\ref{eq: entropy}) and (\ref{eq: mi}), we obtain the following expression for the KL divergence from $\mathbb{P}$ to $\mathbb{P}_{\mathcal{G}'}$: 
\begin{align} \label{eq: kl3}
    D_{KL}(\mathbb{P}\parallel\mathbb{P}_{\mathcal{G}'})
    &= - \left(\sum_{i=1}^{d-1} I(V_i; \Pi_i) + I(Y; \Pi'_Y) \right) \notag \\
    & \quad {+ \sum_{i=1}^{d-1} H(V_i) + H(Y)} - H(\mathcal{V}),
\end{align}
where $I(\cdot;\cdot)$ and $H(\cdot)$ denote mutual information and entropy.

Since entropy terms in Equation (\ref{eq: kl3}) are independent of the attribute graph $\mathcal{G}'$, minimizing $D_{KL}(\mathbb{P}\parallel\mathbb{P}_{\mathcal{G}'})$ is equivalent to maximizing $\sum_{i=1}^{d-1} I(V_i; \Pi_i)+I(Y;\Pi'_Y)$. In other words, to ensure that $\mathbb{P}_{\mathcal{G}'}$ adequately represents $\mathbb{P}$, the attribute graph $\mathcal{G}'$ should maximize the mutual information between each attribute $V_i$ {(resp. $Y$)} and its parent set $\Pi_i$ {(resp. $\Pi'_Y$)}. According to the chain rule and non-negativity of conditional mutual information~\cite{cover1999elements}, for any attribute $X \in \mathcal{V}\setminus (\Pi_i\cup\{V_i\})$, we have $I\left(V_i; \Pi_i\cup\{X\}\right) \geq I\left(V_i; \Pi_i\right)+I\left(V_i; X \mid \Pi_i \right)\geq I\left(V_i; \Pi_i\right)$. Thus, enlarging a parent set cannot reduce mutual information or worsen the KL objective (Equation~\ref{eq: kl3}).

Building on this insight, we further refine the computation of $\mathbb{P}_{\mathcal{G}'}$ by greedily selecting all attributes in $\mathcal{V} \setminus (\mathcal{S} \cup \mathcal{I} \cup {Y})$ as parents of $Y$. This refinement serves as an approximation that reduces the required structural guidance from Figure~\ref{fig: reduce}b to the simpler Figure~\ref{fig: reduce}c, where the dashed orange box is no longer needed. Although this approximation may introduce additional edges directed to $Y$, the resulting factors are admissible by definition and thus acceptable for decision making. Empirically, we observe that the number of such additional edges is typically small, since most attributes in real-world datasets already act as decision-relevant factors.

Taken together, our refinement analysis progressively simplifies the required structural guidance and, as a result, the process effectively bypasses the need for explicit graph construction in \textbf{Phase-A} and \textbf{Phase-B} (see Section~\ref{subsec: naive}), while still enabling \textbf{Phase-C} to extract essential relationships. In turn, the task of causally fair relationship extraction (i.e., \textbf{Step{-}1}) reduces to estimating two distributions: $\mathbb{P}[\mathcal{V}\setminus\{Y\}]$ and $\mathbb{P}[Y\mid \Pi''_Y]$, where $\Pi''_Y=\mathcal{V}\setminus (\mathcal{S}\cup\mathcal{I}\cup\{Y\})$. We elaborate on the distribution estimation process in the following subsections.

\vspace{2mm} \noindent\textbf{Discussion on validity.} 
\FairRep\ performs fairness-aware pre-processing by referencing the refined distribution $\mathbb{P}_{\mathcal{G}'}$, in which the label attribute is causally influenced by sensitive attributes only through admissible and additional attributes. Therefore, by Corollary~\ref{cor: pre-justifiable-fairness}, any reasonable classifier trained on data following the distribution $\mathbb{P}_{\mathcal{G}'}$ is guaranteed to satisfy justifiable fairness. In addition, all preserved relationships are valid: most are untouched and directly inherited from the observational data, whereas the additional dependencies introduced around the label attribute involve only admissible factors and are, therefore, valid for downstream learning.

\subsection{Estimating Attribute Distribution} \label{subsec: distestimation}

We first present our estimation method for the attribute distribution $\mathbb{P}[\mathcal{V}\setminus\{Y\}]$. The estimation of $\mathbb{P}[Y\mid \Pi''_Y]$ is deferred to Section \ref{subsec: datarepair}. A straightforward approach to estimate $\mathbb{P}[\mathcal{V}\setminus\{Y\}]$ is to compute empirical statistics directly from $\mathcal{D}$, but this can be prohibitively expensive in time and space, especially with many attributes or large domains. For example, if $\mathcal{D}$ has 20 attributes each of size 10, the joint domain size becomes $10^{20}$, making direct estimation infeasible.

\begin{figure}[!t]
    \centering
    \includegraphics[width=\linewidth, keepaspectratio]{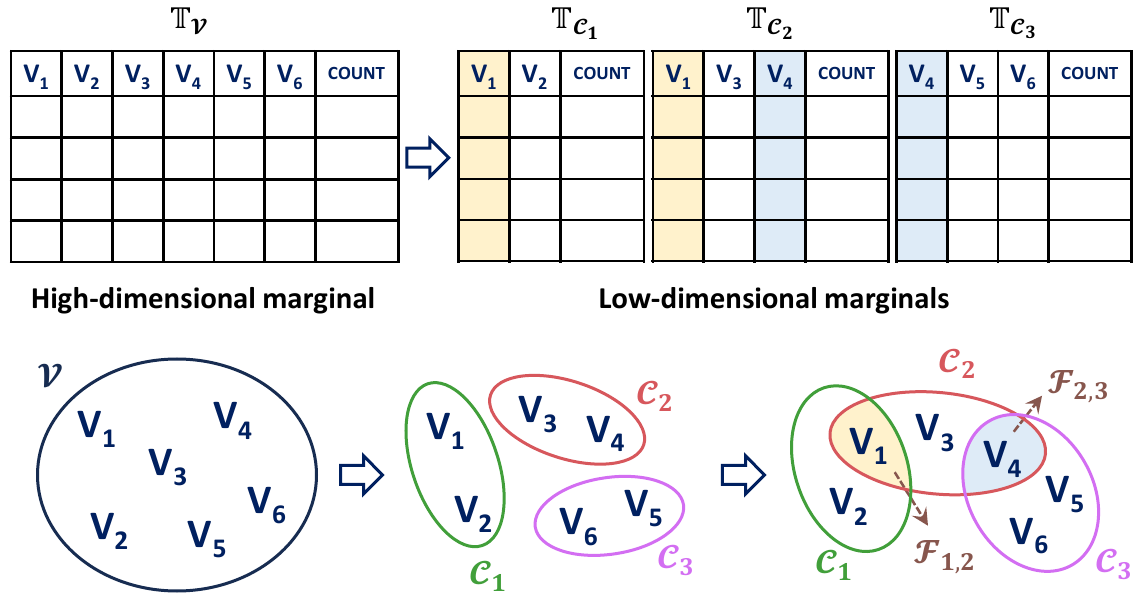}
    \caption{Illustration of the marginal-based decomposition.}
    \label{fig: high2low}
\end{figure}

To ensure scalability, we propose a marginal-based approximation method for estimating the complex joint attribute distribution $\mathbb{P}[\mathcal{V}\setminus\{Y\}]$ from the input data. The core idea is to carefully decompose the high-dimensional space into $r$ subspaces under specific constraints, each with an expected dimensionality of $(k{+}m)$, where $r$, $k$, and $m$ are predefined memory-aware parameters. Since each marginal involves significantly fewer variables than the full joint distribution, these lower-dimensional distributions can be estimated efficiently. We then combine these marginals to approximate the full joint distribution, following the factorization principle and Markov property of the junction tree~\cite{pearl2009causality, koller2009probabilistic}. To formalize this decomposition, we introduce the notion of \textit{separator}, a set of shared attributes that capture the overlap between subspaces. Given a topologically ordered set of attribute subsets $\mathscr{C}{=}\{\mathcal{C}_1, \dots, \mathcal{C}_r\}$ and the corresponding set of separator $\mathscr{F}{=}\{\mathcal{F}_{1,2}, \dots, \mathcal{F}_{r-1,r}\}$, the joint distribution over $\mathcal{V}\setminus\{Y\}$ can be decomposed and approximated as:
\begin{align} \label{eq: high2low}
    \mathbb{P}\left[\mathcal{V}\setminus\{Y\}\right] \approx \mathbb{P}\left[\mathcal{C}_1\right] \cdot \prod_{i=2}^{r} \mathbb{P}\left[\mathcal{C}_i\setminus\mathcal{F}_{i-1,i}\mid\mathcal{F}_{i-1,i}\right].
\end{align}

The upper part of Figure~\ref{fig: high2low} illustrates this decomposition. Here, the joint distribution over attributes $\mathcal{V}{=}\{V_1,\dots,V_6\}$, denoted $\mathbb{T}_{\mathcal{V}}$, is factorized into three marginal distributions $\mathbb{T}_{\mathcal{C}_1}$, $\mathbb{T}_{\mathcal{C}_2}$, and $\mathbb{T}_{\mathcal{C}_3}$. Based on Equation~(\ref{eq: high2low}), the joint distribution can be approximated as
\[\mathbb{P}\left[\mathcal{V}\right] \approx \mathbb{P}\left[V_1, V_2\right] \cdot \mathbb{P}\left[V_3,V_4\mid V_1\right] \cdot \mathbb{P}\left[V_5,V_6\mid V_4\right].\]

A crucial requirement for accurately approximating the joint distribution is to identify an appropriate attribute set for each marginal. Ideally, attributes grouped within the same marginal should have strong relationships so that the decomposition preserves key dependencies in the original data. To facilitate this, we adopt a clustering-based approach that organizes attributes into coherent groups, thereby guiding marginal selection. To ensure that the clustering effectively captures meaningful dependencies, one may use multivariate mutual information (MI), as defined in Equation~(\ref{eq: mi_multi}), as the metric to quantify the amount of shared information among attributes.

However, a naive implementation of the clustering approach using multivariate MI is computationally infeasible, as it requires comparing all combinations of attributes. Specifically, finding the optimal marginal partition involves computing entropy for all possible subsets of the $(d{-}1)$ attributes, resulting in $O(2^d)$ combinations. Moreover, computing entropy for high-dimensional subsets is computationally intensive. To tackle this issue, we adopt an efficient approximation~\cite{hall1999correlation} to estimate the total relational information among a set of attributes by summing the pairwise MI values within the set. This approximation significantly reduces the storage and computational requirements from $O(2^d)$ to $O(d^2)$, as it requires computing only $\binom{d-1}{2}$ pairwise MI values. The pairwise MI between two attributes is computed based on Equation~(\ref{eq: mi}). The clustering problem can then be modeled as a weighted complete undirected graph\footnote{Note that this undirected graph is fundamentally distinct from the fair attribute graph discussed earlier. Here, we focus on addressing the distribution estimation task introduced at the end of Section~\ref{subsec: rationale}.}, where each node represents an attribute and each edge is weighted by the pairwise MI between two attributes. The total relation information of any set of attributes can then be approximated by the sum of all edge weights within the corresponding subgraph. 

Therefore, we pose the distribution estimation problem as follows. To achieve a reliable approximation of the joint distribution, it is crucial to maximize the total relational information captured across all marginals. Formally, consider a weighted complete undirected graph $\mathcal{G}_U{=}(\mathcal{X}, \mathcal{E}, w)$, where $\mathcal{X}{=}\{X_1,\dots, X_{d-1}\}$ denotes the set of nodes (attributes), $\mathcal{E}{=}\{\langle X_i, X_j\rangle \mid X_i, X_j \in \mathcal{X}, i<j\}$ denotes the set of edges, and $w: \mathcal{E}\mapsto\mathbb{R}^+$ is a non-negative weight function defined over edges. A clique in $\mathcal{G}$ is a subset of nodes $\mathcal{C}\subseteq\mathcal{X}$ such that the induced subgraph on $\mathcal{C}$ is complete. Let $\mathcal{E}_{j,k}$ denote the weight of edge $\langle X_j, X_k\rangle$. The weight of clique $\mathcal{C}_i$ is then defined as $\sum_{X_j,X_k\in\mathcal{C}_i, \; j<k}\mathcal{E}_{j,k}$. Our goal is to determine a partitioning strategy that selects a collection of $r$ cliques $\mathscr{C} {=} \{\mathcal{C}_1, \dots, \mathcal{C}_r\}$ that maximizes the total clique weight:
\begin{align}
    \max_{\mathscr{C}}\sum^r_{i=1}\sum_{\substack{X_j,X_k\in\mathcal{C}_i \\ j<k}}\mathcal{E}_{j,k}, \label{eq: opt}
\end{align}
subject to the following conditions: 
\begin{enumerate}[leftmargin=*]
    \item Size constraint:
    \[\mid\mathcal{C}_i\mid\leq k{+}m, \forall i\in\{1, \dots, r\}.\]
    \item Coverage constraint: \[\bigcup^{r}_{i=1}{\mathcal{C}_i}=\mathcal{X}.\]
    \item Overlap constraint, i.e., each clique must overlap with at least one other clique, and every such pairwise overlap contains at least $m$ nodes: \[\forall i\in\{1, \dots, r\}, \exists j\neq i \; \text{such that}\  \vert \mathcal{C}_i \cap \mathcal{C}_j\vert \geq m.\] \[\forall i<j:\;\;|\mathcal{C}_i\cap\mathcal{C}_j|\ge m \;\text{or}\; \mathcal{C}_i\cap\mathcal{C}_j=\varnothing.\]
    \item Acyclicity constraint, i.e., the overlap relationships must be acyclic: \[\forall t\geq3, \forall(i_i,i_2,\dots,i_t) \text{ distinct},\] \[\left(\forall j\in\{1, \dots, t-1\}, \mathcal{C}_{i_j}\cap\mathcal{C}_{i_{j+1}}\neq\varnothing\right) \Rightarrow \mathcal{C}_{i_1}\cap\mathcal{C}_{i_t}=\varnothing.\]
\end{enumerate}

The above constraints aim to balance computational efficiency with approximation accuracy in modeling the joint distribution: (a)~the size of each clique is bounded to ensure that the corresponding marginal distributions remain low-dimensional, which is essential for both computational and memory efficiency;(b)~to ensure completeness, the set of resulting cliques must collectively cover all attributes; (c)~overlap and acyclicity constraints ensure that the joint distribution can be effectively approximated using Equation (\ref{eq: high2low}). 

The resulting cliques define a set of low-dimensional marginal distributions, each over the attributes of a single clique. This construction serves as a distributional proxy for the underlying causally fair dependencies, as supported by Proposition~\ref{prop: fair} and our analysis in Section~\ref{subsec: rationale}, under standard Markov compatibility and faithfulness assumptions.

\subsection{Accelerating Clique Selection} \label{subsec: clique}

While the clique-based formulation reduces the problem to tractable low-dimensional marginals, the size of the search space, together with the structural constraints, still poses significant computational challenges. The following theorem demonstrates the computational intractability of the constrained clique selection problem.
\begin{theorem}\label{thm:np-hardness}
    The constrained clique selection problem in Equation~\eqref{eq: opt} is NP-hard.
\end{theorem}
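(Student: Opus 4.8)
The plan is to prove NP-hardness by exhibiting a polynomial-time reduction from \emph{Partition into Triangles}, a classical NP-complete problem: given a graph $H=(V,E)$ with $|V|=3q$, decide whether $V$ can be partitioned into $q$ vertex-disjoint triples that are pairwise adjacent in $H$. Intuitively, since $\mathcal{G}_U$ is \emph{complete}, every vertex subset is already a clique, so the difficulty of Equation~(\ref{eq: opt}) lies entirely in the covering/partitioning structure rather than in finding cliques, which makes a partition-style source problem the natural choice.

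First I would build the instance. Set $\mathcal{X}=V$ and assign edge weights $w(\langle X_i,X_j\rangle)=1$ whenever $\{X_i,X_j\}\in E$ and $w(\langle X_i,X_j\rangle)=0$ otherwise; this is computable in polynomial time. I then fix the parameters as $k+m=3$ with $m=0$ (so $k=3$) and $r=q$.

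Next I would argue correctness. The coverage constraint requires $\bigcup_{i=1}^{r}\mathcal{C}_i=\mathcal{X}$, and a counting argument shows that $r=q$ cliques of size at most $3$ can cover all $3q$ nodes only if each clique has size exactly $3$ and the cliques are pairwise disjoint; hence every feasible solution is a partition of $V$ into $q$ triples. A triple contributes precisely its three internal edge weights, so the objective in Equation~(\ref{eq: opt}) attains its maximum value $3q$ if and only if each triple is a genuine triangle of $H$, and is strictly smaller otherwise. Therefore the optimum equals $3q$ exactly when $H$ admits a triangle partition, completing the reduction.

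I expect the main obstacle to be the two structural constraints that distinguish this problem from a vanilla bounded-size partition, namely the overlap and acyclicity constraints. The key idea that removes this obstacle is the choice $m=0$: the formal overlap condition $|\mathcal{C}_i\cap\mathcal{C}_j|\ge m$ degenerates to $\ge 0$ and is vacuously true, while disjoint cliques have empty pairwise intersections, so the induced overlap graph has no edges and the acyclicity constraint holds trivially. The only point requiring care is to justify that $m=0$ is a legitimate parameter instance of the problem, so that hardness of this sub-family transfers to the general problem. If one instead demands the non-degenerate regime $m\ge 1$, I would augment the construction with a small backbone of shared connector nodes that forces each selected clique to share exactly $m$ vertices along a tree-structured overlap pattern, thereby meeting both constraints while preserving the equivalence above; this extension is, however, not required to establish NP-hardness.
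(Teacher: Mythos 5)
Your proof is correct, and it follows the same reduction template as the paper's: both fix $k+m=3$ and $m=0$, use $0/1$ edge weights on the complete graph, and argue that total weight $3q$ is achievable if and only if the source instance is a yes-instance, with the overlap and acyclicity constraints holding vacuously for disjoint cliques. The only substantive difference is the source problem: the paper reduces from Exact Cover by 3-Sets, whereas you reduce from Partition into Triangles. Your choice is arguably the cleaner one. In the X3C reduction, the necessity direction must rule out a weight-$3$ triple $\{u,v,w\}$ whose three pairs are each covered by \emph{different} sets $S_i$ without $\{u,v,w\}$ itself belonging to the collection $\mathcal{S}$; the paper's proof does not address this case explicitly. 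With Partition into Triangles there is nothing to map back to --- any weight-$3$ triple is by construction a triangle of $H$, so soundness is immediate. Both proofs share the same residual caveat, which you are right to flag: the hardness is established for the degenerate sub-family $m=0$, and one should either confirm that $m=0$ is an admissible parameter setting of Equation~(\ref{eq: opt}) or carry out the connector-node extension you sketch for $m\ge 1$.
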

\begin{proof}[Proof Sketch]
    We prove NP-hardness via a polynomial-time reduction from the classical \emph{Exact Cover by 3-Sets} problem, which is NP-complete~\cite{garey1979computers}. The complete proof is deferred to Appendix~\ref{appendix: np-hardness}.
\end{proof}
This complexity presents a significant obstacle to the practical deployment of the proposed attribute distribution estimation method, as solving the clique selection problem exactly is computationally infeasible. To address this, we design an efficient heuristic that identifies a set of cliques sufficient to construct high-quality marginals for accurate joint distribution estimation. The heuristic consists of two main steps: \textbf{clique initialization} and \textbf{clique extension}. In the initialization step, we select edges with strong intra-clique dependencies to partition the graph into $r$ disjoint maximal cliques. Each clique contains at most $k$ (except one with $(k{+}m)$) attributes and serves as a structural core for later expansion. In the extension step, we incorporate additional attributes based on inter-clique dependencies, forming overlapping cliques of size at most $(k{+}m)$ that better capture the underlying relationships.

Figure~\ref{fig: high2low} illustrates this procedure. Consider a dataset with attribute set $\mathcal{V}{=}\{V_1, \dots, V_6\}$ and joint distribution $\mathbb{T}_\mathcal{V}$. \FairRep\ first partitions $\mathcal{V}$ into $r{=}3$ disjoint cliques: $\mathcal{C}_1{=}\{V_1, V_2\}$, $\mathcal{C}_2{=}\{V_3, V_4\}$, and $\mathcal{C}_3{=}\{V_5, V_6\}$, each of size at most $k=2$. It then partially merges these disjoint cliques to form overlapping cliques: $\mathcal{C}_1{=}\{V_1, V_2\}$, $\mathcal{C}_2{=}\{V_1, V_3, V_4\}$, and $\mathcal{C}_3{=}\{V_4, V_5, V_6\}$, each of size at most $k{+}m{=}3$. Finally, based on the attributes in each overlapping clique, \FairRep\ decomposes the joint distribution $\mathbb{T}_\mathcal{V}$ into three low-dimensional marginals, $\mathbb{T}_{\mathcal{C}1}$, $\mathbb{T}_{\mathcal{C}2}$, and $\mathbb{T}_{\mathcal{C}_3}$. A detailed description of each step is provided below. 

\begin{algorithm}[!t] \small
    \caption{\textsc{CliqueInitialization}} 
    \label{algo: cluster}
        \KwIn{Attribute set $\mathcal{X}=\{X_1, \ldots, X_d\}$; MI matrix $\mathcal{E}=\{ \mathcal{E}_{i,j} \mid 1 \!\leq\! i \!<\! j \!\leq\! d\}$; Maximum cluster size $k$; Maximum separator size $m$}
        \KwOut{Disjoint clique set $\mathscr{C}_\text{init}$}
        Initialize $\mathscr{C}_\text{init} \gets \varnothing$, $\mathcal{U} \gets \mathcal{X}$, $\mathcal{E}' \gets \mathcal{E}$, FLAG $\gets$ TRUE\;
        Set number of clusters $r \gets \ceil{\frac{d-m-1}{k}}$\;

        \While{$\vert\mathscr{C}_\text{init}\vert < r$}{ \label{line: init start}
            Select edge $\langle i, j \rangle$ with the smallest $\mathcal{E}_{i,j}'$, where $\mathcal{E}_{i,j}' \in \mathcal{E}'$\;
            \For{$e \in \{i, j\}$}{
                \If{$X_e \in \mathcal{U}$ \textbf{and} $\vert\mathscr{C}_\text{init}\vert < r$}{
                    $\mathscr{C}_\text{init} \gets \mathscr{C}_\text{init}\cup\{\{X_e\}\}$; $\mathcal{U} \gets \mathcal{U} \setminus \{X_e\}$\;
                }
            }
            Remove $\mathcal{E}'_{i,j}$ from $\mathcal{E}'$\;
        } \label{line: init end}

        \While{$\mathcal{U} \neq \emptyset$}{ \label{line: gen start}
            Identify $\langle X_u, \mathcal{C}_i \rangle$ pair with the largest $\Delta$ value, where $\Delta$ is calculated by Equation~(\ref{eq: delta}), $X_u \in \mathcal{U}$, $\mathcal{C}_i \in \mathscr{C}_\text{init}$, and $|\mathcal{C}_i| < k{+}m$ if FLAG else $k$\;    
            Update FLAG $\gets$ FALSE if $\vert \mathcal{C}_i \vert > k$\;
            Update $\mathscr{C}_\text{init}$ by doing $\mathcal{C}_i \gets \mathcal{C}_i \cup \{X_u\}$\;  
            Remove $X_i$ from $\mathcal{U}$\; 
        } \label{line: gen end}
        \Return{$\mathscr{C}_\text{init}$}\;
\end{algorithm}

\vspace{2mm} \noindent\textbf{Clique initialization.}
This step partitions the graph $\mathcal{G}_U$ into $r$ disjoint complete subgraphs, each forming a maximal clique. The resulting set, denoted as $\mathscr{C}_\text{init}{=}\{\mathcal{C}_1, \mathcal{C}_2, \dots, \mathcal{C}_r\}$, must satisfy three conditions: (i) each clique contains no more than $k$ attributes, except for one that may contain up to $(k{+}m)$; (ii) all attributes are covered; and (iii) the total relational information across cliques is as high as possible. The pseudo-code of this process is shown in Algorithm~\ref{algo: cluster}.

Intuitively, nodes connected by weaker dependencies are more likely to belong to different sub-structures. Based on this insight, we iteratively identify edges with the smallest weights and use their endpoints as centroids to initialize $r$ distinct cliques (lines~\ref{line: init start}--\ref{line: init end} of Algorithm~\ref{algo: cluster}).

Next, for each remaining unassigned attribute $X_u$, we evaluate its affinity with each existing clique $\mathcal{C}_i \in \mathscr{C}_{\text{init}}$ using the metric $\Delta(X_u, \mathcal{C}_i)$, defined as:
\begin{align}
    \Delta(X_u, \mathcal{C}_i) = \frac{\sum_{X_j\in\mathcal{C}_i} \mathcal{E}_{u,j}}{\sqrt{\mid\mathcal{C}_i\mid+2\sum_{\substack{X_k\in\mathcal{C}_i\setminus\{X_j\}\\k>j}}\sum_{X_j\in\mathcal{C}_i}\mathcal{E}_{j,k}}},\label{eq: delta}
\end{align}
which is a classic principal function used in correlation-based feature selection~\cite{hall1999correlation}. 

We then greedily select the attribute-clique pair $\langle X_u, \mathcal{C}_i \rangle$ with the highest $\Delta$. If $\mathcal{C}_i$ has not yet reached its capacity, $X_u$ is added; otherwise, we proceed to the next-best pair. The procedure assigns one attribute per iteration and, after $(n{-}r)$ iterations, yields the final set $\mathscr{C}_\text{init}$ of $r$ disjoint cliques (lines \ref{line: gen start}--\ref{line: gen end} in Algorithm \ref{algo: cluster}). Notably, this approach is computationally efficient, as it updates the relational information incrementally rather than recomputing $\Delta$ from scratch at each iteration.

    \begin{algorithm}[!t] \small
        \caption{\textsc{CliqueExtension}} 
        \label{algo: clique}
        \KwIn{Attribute set $\mathcal{X} = \{X_1, \ldots, X_d\}$; Disjoint clique set $\mathscr{C}_\text{init} = \{\mathcal{C}_1, \ldots, \mathcal{C}_r\}$; Maximum separator size $m$}
        \KwOut{Overlapped clique set $\mathscr{C}$}
        
        $\mathcal{C}_{\text{active}} \gets$ The clique in $\mathscr{C}_\text{init}$ with size $(k{+}m)$\;
        Initialize $\mathscr{F} \gets \varnothing$, $\mathscr{C} \gets \{\{\mathcal{C}_{\text{active}}\}\}$, $\mathscr{U} \gets \mathscr{C}_\text{init} \setminus \{\mathcal{C}_{\text{active}}\}$\;
    
        \While{$\mathscr{U} \neq \varnothing$}{ \label{line: separator start}
            Identify $\langle \mathcal{C}_{\text{active}}, \mathcal{C}_{\text{opt}} \rangle $ with the largest $\Delta_m$, where $\mathcal{C}_{\text{active}} \in \mathscr{U}$, $\mathcal{C}_{\text{opt}} \in \mathscr{C}$, and $\Delta_m$ is from Equation~(\ref{eq: delta_2})\; \label{line: neighbor}
            $\mathcal{F}_{\text{active}, \text{opt}} \gets$ top-$m$ attributes from $\mathcal{C}_{\text{opt}}$ for $\Delta_m$\; \label{line: factor}
            Update $\mathscr{C} \gets \mathscr{C} \cup \{\mathcal{C}_{\text{active}}\cup\mathcal{F}_{\text{active}, \text{opt}}\}$\;
            Remove $\mathcal{C}_\text{active}$ from $\mathscr{U}$\;
        } \label{line: separator end}
        
        \Return{$\mathscr{C}$}\;
    \end{algorithm}

\vspace{2mm} \noindent\textbf{Clique extension.} 
The goal of this step is to expand the initial \emph{disjoint} clique set $\mathscr{C}_\text{init}$ into an \emph{overlapping} set $\mathscr{C}$ that satisfies both overlap and acyclicity constraints. This is achieved by iteratively merging cliques using carefully chosen separators, while ensuring a size bound on each extended clique.

To enforce acyclicity, we extend cliques in a tree-like manner, as shown in Algorithm~\ref{algo: clique}. We first select the clique of size $(k{+}m)$ from $\mathscr{C}_{init}$ as the seed, add it to $\mathscr{C}$, and mark the remaining cliques as unprocessed, denoted by $\mathscr{U}$. To satisfy the size constraint, each separator may contain at most $m$ attributes. Accordingly, we define the averaged top-$m$ affinity between cliques $\mathcal{C}_{\text{active}}$ and $\mathcal{C}_{\text{opt}}$:
\begin{align}
    \Delta_m(\mathcal{C}_\text{active}, \mathcal{C}_\text{opt}) = \max_{\{X_1, \dots. X_m\} \subseteq \mathcal{C}_{\text{opt}}} \frac{1}{m} \sum_{k=1}^{m} \Delta(X_k, \mathcal{C}_{\text{active}}), \label{eq: delta_2}
\end{align}
which averages the $\Delta$ scores of the $m$ attributes in $\mathcal{C}_{\text{opt}}$ most related to $\mathcal{C}_{\text{active}}$.

At each iteration, we identify the pair $\langle\mathcal{C}_{\text{active}}, \mathcal{C}_{\text{opt}}\rangle$ with the largest $\Delta_m$ value and take the corresponding top-$m$ attribute nodes from $\mathcal{C}_{\text{opt}}$ as the separator $\mathcal{F}_{\text{active},\text{opt}}$. We then extend $\mathcal{C}_\text{active}$ by incorporating $\mathcal{F}_{\text{active},\text{opt}}$, thereby expanding it into a larger clique. The resulting expanded clique is marked as processed and added to $\mathscr{C}$. This iterative process continues until all cliques are processed. Note that in each iteration, exactly one unprocessed clique is extended and incorporated. After $(r{-}1)$ iterations, all cliques are processed, resulting in the final ordered, overlapping, and acyclic clique set $\mathscr{C}{=}\{\mathcal{C}_1, \dots, \mathcal{C}_{r}\}$, as shown in lines~\ref{line: separator start}--\ref{line: separator end} of Algorithm~\ref{algo: clique}.

\vspace{2mm} \noindent\textbf{Putting it all together.} 
Given the ordered clique set produced by the above algorithm, each clique defines a low-dimensional marginal distribution. Together, these marginals collectively approximate the high-dimensional joint attribute distribution, as described in Equation~(\ref{eq: high2low}).

\subsection{The Complete Framework} \label{subsec: datarepair}

Recall that causally fair relationship extraction relies on estimating two key distributions, i.e., $\mathbb{P}[\mathcal{V} \setminus \{Y\}]$ and $\mathbb{P}[Y \mid \Pi_Y'']$, as discussed in Section~\ref{subsec: rationale}. While the former can be efficiently approximated using the clique-based factorization algorithm described in Section~\ref{subsec: clique}, the latter has a fundamentally different structure and therefore requires a tailored strategy. 

To unify both within the same clique-based framework, we construct an additional clique for $\mathbb{P}[Y \mid \Pi_Y'']$. Specifically, we derive the fair parent set $\Pi_Y''$ by selecting the $(k{+}m{-}1)$ fair attributes (from $\mathcal{A}\cup\mathcal{W}$) most strongly associated with $Y$. These attributes are treated as a separator and, together with $Y$, form a new clique that captures the conditional dependency. This clique is then integrated into the existing clique set, expanding $r$ cliques to $(r{+}1)$, with $r$ corresponding separators. The procedure is given in lines~\ref{line: dist start}--\ref{line: dist end} of Algorithm~\ref{algo: repair}.

\begin{algorithm}[!t] \small
    \caption{\textsc{DataPreprocessing}}
    \label{algo: repair}
    \KwIn{
        Database $\mathcal{D}$ with attribute set $\mathcal{V} = \mathcal{S} \cup \mathcal{I} \cup \mathcal{A} \cup \mathcal{W} \cup \{Y\}$; Maximum cluster size $k$; Maximum separator size $m$
    }
    \KwOut{Processed database $\mathcal{D}'$}

    \tcc{Fair relationship extraction}
    
    Initialize $\mathcal{E} \gets \varnothing$\; \label{line: dist start}
    \For{each pair of attributes $\langle X_i, X_j \rangle \in \mathcal{V}$}{
        Compute MI $\mathcal{E}_{i,j} = I[X_i; X_j]$ and add it to $\mathcal{E}$\;
    }
    $\mathscr{C}_\text{init} \gets \textsc{CliqueInitialization}(\mathcal{V}, \mathcal{E}, k, m)$\;
    $\mathscr{C} \gets \textsc{CliqueExtension}(\mathcal{V}, \mathscr{C}_\text{init}, m)$\;
    $\mathcal{F}_Y \gets$ Select $(k{+}m{-}1)$ attributes from $\mathcal{V} \setminus (\mathcal{S} \cup \mathcal{I} \cup \{Y\})$ with strongest relationship to $Y$\;
    Update $\mathscr{C} \gets \mathscr{C} \cup \{\mathcal{F}_Y \cup \{Y\}\}$\; \label{line: dist end}

    \tcc{Fairness enforcement}

    Set $n \gets |\mathcal{D}|$ \; \label{line: repair start}
    \For{$i \gets 1$ to $r+1$}{
        \eIf{$i = 1$}{
            Initialize $\mathcal{D}'$ by sampling $n$ values for attributes in $\mathcal{C}_1$ from $\mathbb{P}[\mathcal{C}_1]$\;
        }{
            $\mathcal{C}_{i-1}, \mathcal{C}_i \gets$ the ($i{-}1$)-th, $i$-th clique in $\mathscr{C}$\;
            Identify separator $\mathcal{F}_{i-1,i} \gets \mathcal{C}_{i-1} \cap \mathcal{C}_i$\;
            Sample $n$ values for $\mathcal{C}_i \setminus \mathcal{F}_{i-1,i}$ from $\mathbb{P}[\mathcal{C}_i \mid \mathcal{F}_{i-1,i}]$, and fill the corresponding columns in $\mathcal{D}'$\; \label{line: repair end}
        }
    }

    \Return{$\mathcal{D}'$}\;
\end{algorithm}

Let $\mathscr{C} {=} \{\mathcal{C}_1, \dots, \mathcal{C}_{r+1}\}$ denote the ordered cliques, and let $\mathscr{F} {=} \{\mathcal{F}_{1,2}, \dots, \mathcal{F}_{r,r+1}\}$ denote their separators. The full distribution over the fair attribute graph $\mathcal{G}'$ can then be approximated as:
\begin{align} \label{eq: dist-est}
    \mathbb{P}_{\mathcal{G}'}
    &= \mathbb{P}[\mathcal{V}\setminus\{Y\}]\cdot \mathbb{P}[Y\mid \Pi_Y']  \notag\\
    &\approx \mathbb{P}[\mathcal{C}_1]\cdot\prod_{i=2}^{r+1}{\mathbb{P}[\mathcal{C}_i \setminus \mathcal{F}_{i-1,i} \mid \mathcal{F}_{i-1,i}]}.
\end{align}

Building on this approximation, \textbf{Step-2} (introduced at the beginning of Section~\ref{sec: method}) enforces the empirical distribution of the calibrated database $\mathcal{D'}$ to match the target distribution in Equation~(\ref{eq: dist-est}). Importantly, it is unnecessary to directly compute the full distribution $\mathbb{P}_{\mathcal{G}'}$. Instead, the procedure operates in batches: each clique in $\mathscr{C}$ is processed sequentially, as shown in lines~\ref{line: repair start}-\ref{line: repair end} of Algorithm~\ref{algo: repair}. Specifically, the procedure starts by assigning values to $\mathcal{C}_1$ from $\mathbb{P}[\mathcal{C}_1]$. For each subsequent clique $\mathcal{C}_i$ $(i{\geq}2)$, the attributes in $\mathcal{C}_i \setminus \mathcal{F}_{i-1,i}$ are filled according to $\mathbb{P}[\mathcal{C}_i \setminus \mathcal{F}_{i-1,i} \mid \mathcal{F}_{i-1,i}]$, using the values already assigned to the separator $\mathcal{F}_{i-1,i}$. Once all $(r{+}1)$ cliques are processed, the resulting calibrated database is obtained, whose empirical distribution aligns with $\mathbb{P}_{\mathcal{G}'}$.

\subsection{Utility-Fairness Trade-off} \label{subsec: tradeoff}

\FairRep\ strictly enforces the fairness constraint by fully eliminating the influence of sensitive information. To enable a more flexible balance between utility and fairness, we introduce \FairRepAd{}, a variant of \FairRep\ that interpolates between the original and fair distributions through a tunable parameter $\alpha{\in}[0,1]$. Specifically, the reference distribution of \FairRepAd{} is defined as:
\begin{align} \label{eq: trade}
    \mathbb{P}_{\mathcal{G}'-\mathcal{G}} = \alpha\mathbb{P}_{\mathcal{G}'} + (1-\alpha)\mathbb{P}_{\mathcal{G}} 
\end{align}
By varying $\alpha$, users can control the trade-off: a higher $\alpha$ prioritizes fairness at the expense of utility. Notably, when $\alpha{=}1$, \FairRepAd\ reduces to \FairRep, fully enforcing the fairness constraint.

\section{Experiments} \label{sec: eval}

In this section, we present extensive experiments to evaluate the feasibility and effectiveness of \FairRep.

\subsection{Setup}

We implement the proposed \FairRep\ framework in Python\footnote{Available at \url{https://github.com/iamzhengying/CausalPre.git}}. All experiments are conducted on a machine with two Xeon(R) Gold 6326@2.90 GHz CPUs and 256GB of DRAM.

\vspace{2mm} \noindent\textbf{Datasets.}
We use three well-established benchmark datasets in our experimental evaluations: $\mathsf{Adult}$~\cite{adultData2024}, $\mathsf{COMPAS}$~\cite{compasData2024}, and $\mathsf{Census\text{-}KDD}$~\cite{censuskddData2024}, along with several synthetic datasets generated according to the causal process described in~\cite{markakis2024press}. The statistics of these datasets are summarized in Table \ref{tab: data}.

\begin{table}[t]
    \centering
    \caption{Dataset statistics.}
    \label{tab: data}
    \begin{small}
    \begin{tabular}{lcccc}
        \hline
        Dataset         & \#Tuples      & \#Attributes          & Avg. Dom   \\
        \hline
        Adult           & 32,561        & 13                    & 13.33   \\
        COMPAS          & 6,130         & 8                     & 4    \\
        Census-KDD      & 196,130       & 28                    & 11.67   \\
        Synthetic (Section~\ref{subsec: exp-recovery})       & 50,000    & 6   & 4   \\
        Synthetic (Section~\ref{subsec: exp-scalability})     & 60,000,000    & 10 -- 70   & $/$   \\
        \hline
    \end{tabular}
    \end{small}
\end{table}

\vspace{2mm} \noindent\textbf{Measurement.}
We evaluate the quality of the pre-processed dataset using three classifiers: Logistic Regression (LR), Random Forest (RF), and Multi-Layer Perceptron (MLP). Their prediction performance serves as a proxy for data quality, as more reliable training data supports more accurate and less biased learning. Specifically, each dataset is split into training and testing sets, with only the training set pre-processed to mitigate unfair patterns, while the testing set remains unmodified to reflect real-world deployment~\cite{salimi2019interventional}. Unless otherwise specified, classifiers are trained on the pre-processed training data and evaluated on the original testing data. To ensure robustness and reduce variance, we adopt 5-fold cross-validation in all experiments.

\vspace{2mm} \noindent\textbf{Baseline.}
We evaluate \FairRep\ against state-of-the-art causally fair pre-processing methods, including Cap-MS~\cite{salimi2019interventional}, Cap-MF~\cite{salimi2019interventional} and OTClean~\cite{pirhadi2024otclean}. Cap-MS and Cap-MF are originally designed to enforce saturated CI constraints over all attributes, while our setting considers unsaturated CI constraints limited to a subset. For compatibility, we extend the functionality of Cap-MS and Cap-MF to handle this scenario. In addition, OTClean's original protocol modifies both training and testing data, which differs from our evaluation setup (train-only pre-processing). To ensure a fair comparison, we consider two variants: ``OTClean-RT'', which modifies the testing data, and ``OTClean'', which leaves the testing set unchanged. Implementation details of these adapted baselines are provided in Appendix~\ref{appendix: code_extension}.

We also report results on two reference datasets: ``Original'', the unmodified dataset that reveals inherent utility and discrimination, and ``Dropped'', where all sensitive and inadmissible attributes are removed.

\vspace{2mm} \noindent\textbf{Metrics.}
The effectiveness of causally fair data pre-processing frameworks is evaluated by two criteria: preservation of data utility and the mitigation of discrimination. \textit{Utility} is assessed by the AUC score of prediction performance, with higher values indicating better utility. \textit{Discrimination} is measured by the Ratio of Observational Discrimination (ROD)~\cite{salimi2019interventional, pirhadi2024otclean}, which quantifies how much a classifier deviates from the fairness. Let $\hat{Y}$ denote the classifier output. For values $\text{\scriptsize$\mathcal{S}$}_0, \text{\scriptsize$\mathcal{S}$}1 \in \text{Dom}(\mathcal{S})$ and $a \in \text{Dom}(\mathcal{A})$, ROD is defined as:
\begin{align}
    \text{ROD} = \max_{\text{\scriptsize$\mathcal{S}$}_0, \text{\scriptsize$\mathcal{S}$}_1 \in Dom(\mathcal{S})} \frac{1}{|Dom(\mathcal{A})|} \sum_{a \in Dom(\mathcal{A})} \overline{\text{ROD}} (\text{\scriptsize$\mathcal{S}$}_0, \text{\scriptsize$\mathcal{S}$}_1; \hat{Y} \mid a), \notag
\end{align}
where 
\begin{align*}
    \overline{\text{ROD}} (\text{\scriptsize$\mathcal{S}$}_0, \text{\scriptsize$\mathcal{S}$}_1; \hat{Y} \mid a) 
    = \frac{\mathbb{P}[\hat{Y}=1 \mid \text{\scriptsize$\mathcal{S}$}_0, a] \cdot \mathbb{P}[\hat{Y}=0 \mid \text{\scriptsize$\mathcal{S}$}_1, a]}{\mathbb{P}[\hat{Y}=0 \mid \text{\scriptsize$\mathcal{S}$}_0, a] \cdot \mathbb{P}[\hat{Y}=1 \mid \text{\scriptsize$\mathcal{S}$}_1, a]}. \notag
\end{align*}

In this paper, the ROD is expressed as the absolute value of its logarithm and normalized to $[0,1]$ for consistent interpretation. A value of 0 indicates no discrimination, while higher values signify greater discrimination.

\begin{figure}[!t]
    \centering
    \begin{subfigure}[t]{\linewidth}
        \centering
        \hspace{2mm} \includegraphics[width=0.85\linewidth]{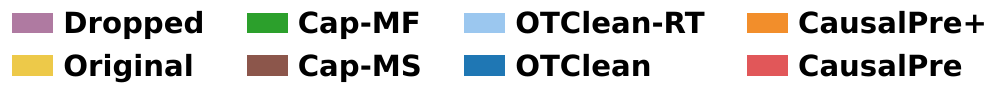}
        \vspace{2mm}
    \end{subfigure}
    \begin{subfigure}[t]{\linewidth}
        \centering
        \hspace{-2mm}\includegraphics[width=0.95\linewidth]{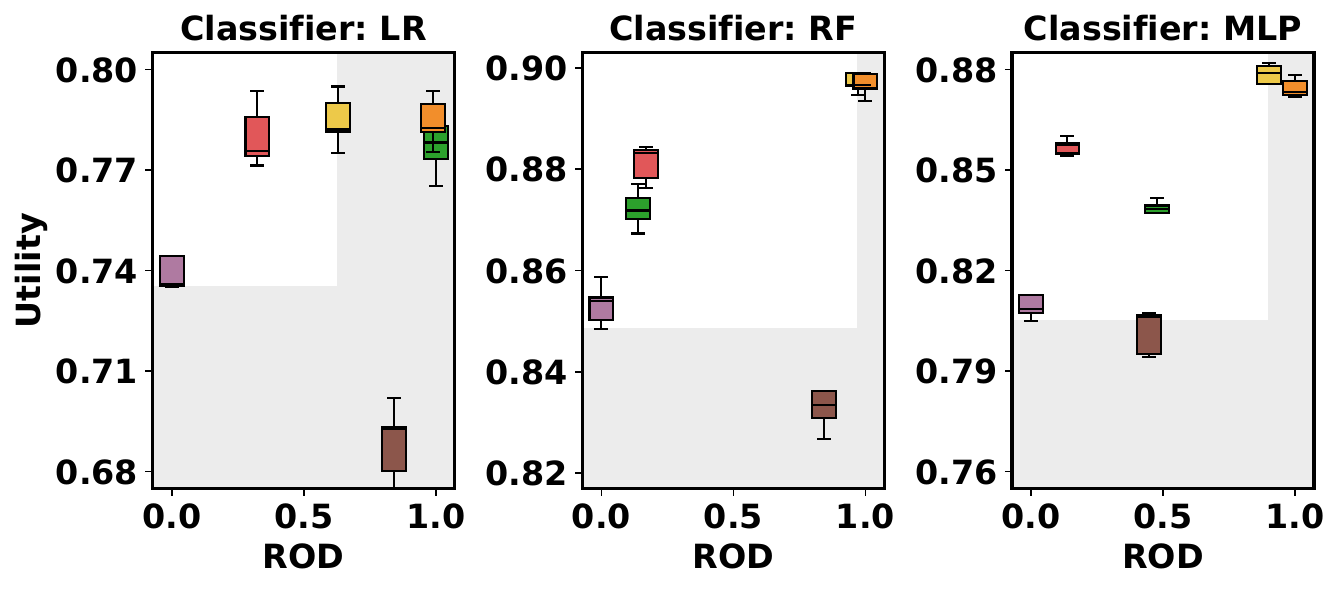}
        \vspace{-2mm}\caption{$\mathsf{Adult}$}
        \label{fig: adult}
        \vspace{2mm}
    \end{subfigure}
    \begin{subfigure}[t]{\linewidth}
        \centering
        \hspace{-2mm}\includegraphics[width=0.95\linewidth]{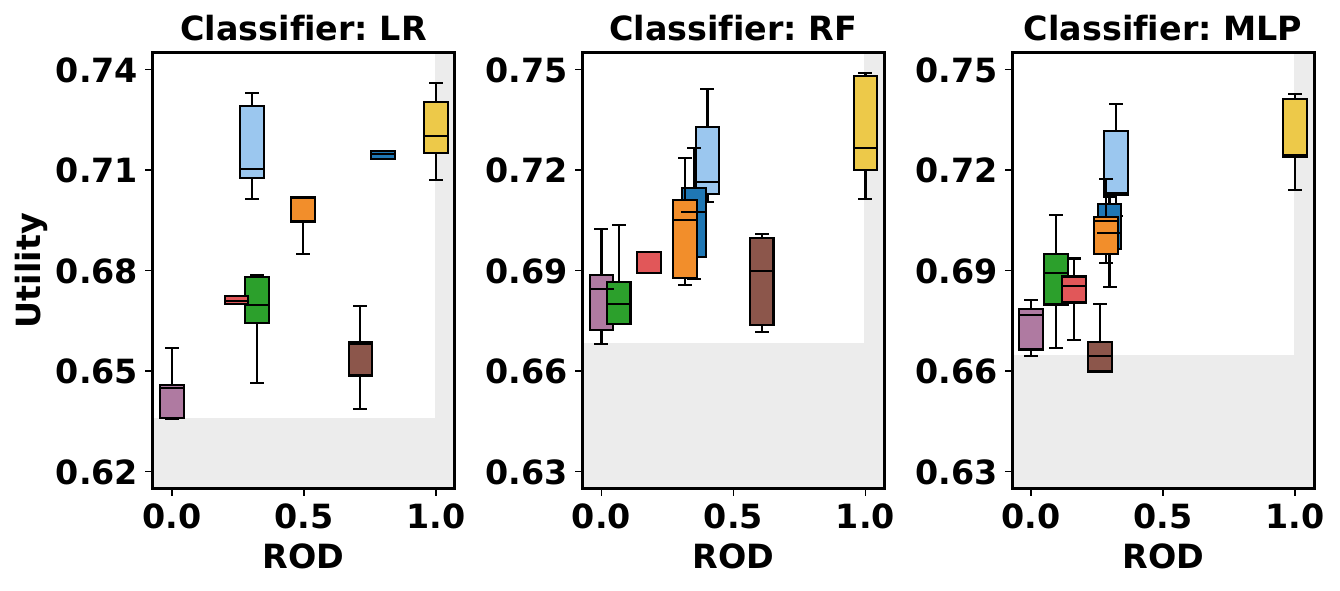}
        \vspace{-2mm}\caption{$\mathsf{COMPAS}$}
        \label{fig: compas}
        \vspace{2mm}
    \end{subfigure}
    \begin{subfigure}[t]{\linewidth}
        \centering
        \hspace{-2mm}\includegraphics[width=0.95\linewidth]{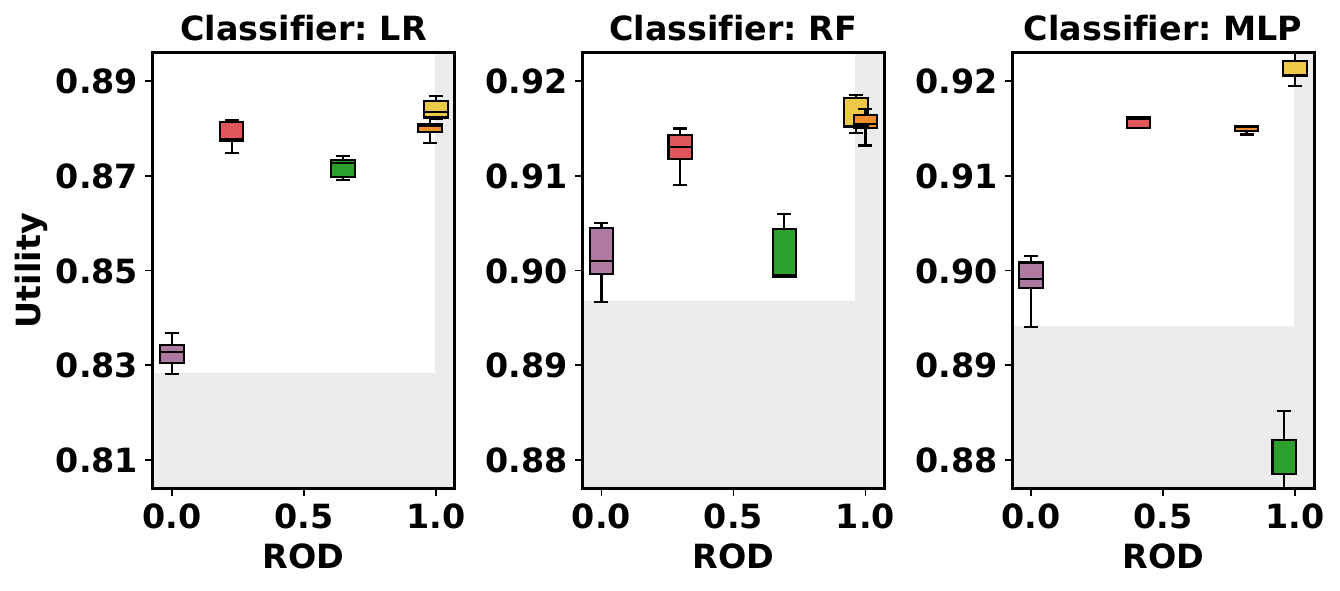}
        \vspace{-2mm}\caption{$\mathsf{Census\text{-}KDD}$}
        \label{fig: census}
    \end{subfigure}
    \caption{End-to-end performance on real-world datasets. The gray shaded area indicates invalid regions.}
    \label{fig: real}
    \vspace{-2mm}
\end{figure}

\subsection{End-To-End Performance Evaluation} \label{subsec: exp-endtoend}

We evaluate the end-to-end performance of all methods on the three benchmark datasets. The results are reported in Figure~\ref{fig: real}, where all 5-fold utility scores and average discrimination values are presented as box plots. Some baselines are excluded from specific experiments due to memory or runtime limits. For example, OTClean and OTClean-RT run out of memory on our experimental platform when dealing with datasets $\mathsf{Adult}$ and $\mathsf{Census\text{-}KDD}$, while Cap-MS fails to finish within 8 hours on $\mathsf{Census\text{-}KDD}$, even with 32 parallel threads. For \FairRep, we set ($k$, $m$) to (5, 7), (4, 3), and (6, 15) for datasets $\mathsf{Adult}$, $\mathsf{COMPAS}$, and $\mathsf{Census\text{-}KDD}$, respectively. Further parameter analysis is provided in Section~\ref{subsec: exp-param}. A method is deemed \textit{invalid} if it produces higher discrimination than ``Original'' or lower utility than ``Dropped''. Such cases are marked as shaded regions in the box plots. We highlight several key observations as follows.

First, across a broad range of datasets and classifiers, \FairRep\ is the only approach that consistently delivers valid and effective pre-processing. OTClean and OTClean-RT perform reasonably well on the smaller $\mathsf{COMPAS}$ dataset but fail to scale to larger ones. Cap-MF and Cap-MS, while principled, often produce invalid results because their strict independence constraints often over-prune the data, leading to excessive utility loss. In contrast, \FairRep\ is both robust and efficient. For example, it completes pre-processing on the largest dataset, $\mathsf{Census\text{-}KDD}$ (with around 0.2 million tuples and 28 attributes), within 2 minutes. 

Second, the variant \FairRepAd\ achieves desirable performance when labels are weakly influenced by sensitive attributes, as in $\mathsf{COMPAS}$. In this setting, $\alpha$ is set to $0.83$, $0.95$, and $0.95$ for the three classifiers. Figure~\ref{fig: alpha} illustrates the empirical utility-fairness trade-off: green points denote results for different $\alpha$ values, while the red line shows the fitted trend. Within the expected margin of variation, the overall pattern indicates that decreasing $\alpha$ relaxes the fairness constraint and improves utility. However, for $\mathsf{Adult}$ and $\mathsf{Census\text{-}KDD}$ where sensitive attributes are more strongly tied to the label, tuning $\alpha$ has a less significant effect; even high values (e.g., $\alpha{=}0.99$) may leave residual discrimination. Designing more fine-grained mechanisms to balance fairness and utility is left for future work. 

Third, among valid results, \FairRep\ consistently achieves an excellent balance between utility and fairness. On $\mathsf{Adult}$, it improves utility by an average of 1.67\% over Cap-MF while maintaining better or comparable fairness. On $\mathsf{COMPAS}$, it performs comparably to OTClean and OTClean-RT: \FairRep\ yields stronger fairness improvements, OTClean-RT achieves higher utility, and OTClean generally balances the two. Notably, OTClean-RT adjusts testing data, whereas \FairRep, \FairRepAd, and OTClean leave it unchanged, offering practical alternatives depending on the desired trade-off. On $\mathsf{Census\text{-}KDD}$, \FairRep\ reduces discrimination by 77\% with LR, 69\% with RF, and 59\% with MLP, while keeping utility drops under 0.7\%. This variation reflects model capacity: stronger models tend to extract more detailed correlations, including spurious ones, leading to different fairness-utility dynamics.

Finally, performance on small datasets like $\mathsf{COMPAS}$ shows greater variance due to limited data. Even so, \FairRep\ exhibits stronger robustness than most baselines, as evidenced by shorter box plots. This stability is attributed to its causality-guided design, which calibrates data at its causal roots.

\subsection{Parameter Analysis}\label{subsec: exp-param}

In this subsection, we study the effect of varying key parameters in \FairRep, specifically the number of cliques $r$ and the maximum clique size $(k{+}m)$. Here, $k$ and $m$ are internal parameters that control intermediate clique construction and overlap handling, respectively. The evaluation is conducted on the $\mathsf{Adult}$ dataset with the MLP classifier. 

Figure~\ref{fig: params} reports the results. In particular, Figure~\ref{fig: param1} examines the effect of varying the number of cliques $r$, with each clique set to its maximum feasible size. Figure~\ref{fig: param2} explores the impact of the maximum clique size ($k{+}m$), with the number of cliques fixed at 2, i.e., $r{=}2$. Since $r{=}\ceil{\frac{d-m-1}{k}}$ and $d{=}2$, the values of $k$ and $m$ are uniquely determined; for instance, $k{+}m{=}7$ can only result from $k{=}5$ and $m{=}2$. Overall, utility and fairness (as measured by ROD) remain relatively stable across different parameter settings. Utility shows only a slight decrease, with a 0.4\% drop when $r{>}5$ and a 2.6\% drop when $(k{+}m){<}9$. This mild degradation is likely due to the marginal loss of high-dimensional information. Interestingly, it also improves fairness by eliminating deeply embedded discriminatory signals, resulting in lower ROD scores.

\begin{figure}[!t]
    \centering
    \hspace{-3mm}
    \begin{subfigure}[t]{0.495\linewidth}
        \centering
        \includegraphics[width=\linewidth, keepaspectratio]{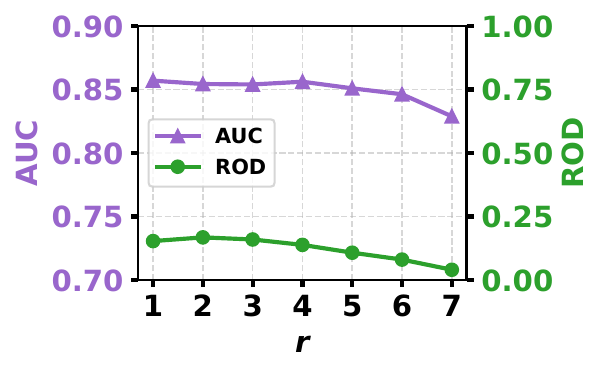}
        \vspace{-6mm}
        \caption{Varying the number of cliques.}
        \label{fig: param1}
    \end{subfigure}
    \begin{subfigure}[t]{0.495\linewidth}
        \centering
        \includegraphics[width=\linewidth, keepaspectratio]{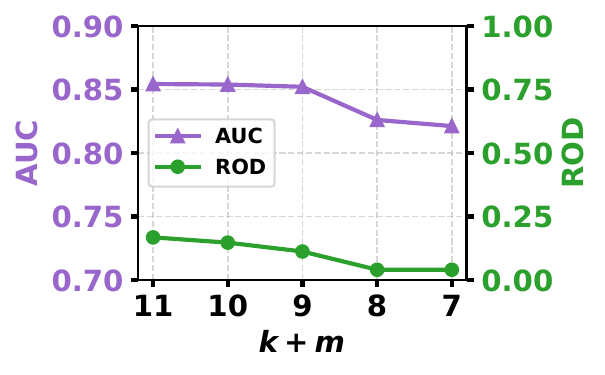}
        \vspace{-6mm}
        \caption{Varying the clique size.}
        \label{fig: param2}
    \end{subfigure}
    \caption{Parameter sensitivity analysis.}
    \label{fig: params}
\end{figure}

\begin{figure}[!t]
    \centering
    \hspace{-1.5mm}
    \includegraphics[width=0.87\linewidth, keepaspectratio]{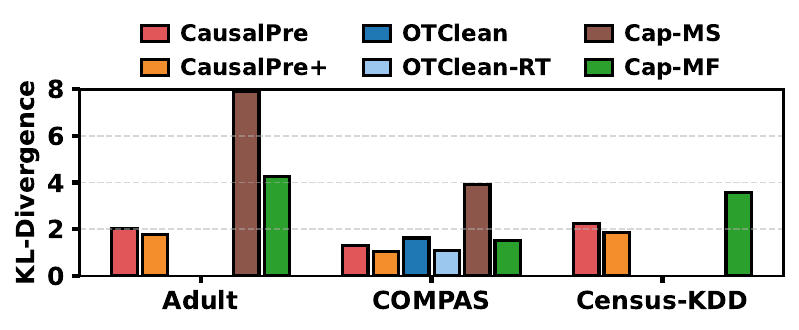}
    \vspace{-1mm} 
    \caption{Comparison of statistical distortion.}
    \label{fig: kl}
\end{figure}

\begin{figure*}[htbp]
    \centering
    \begin{subfigure}{0.13\linewidth}
      \includegraphics[width=\linewidth]{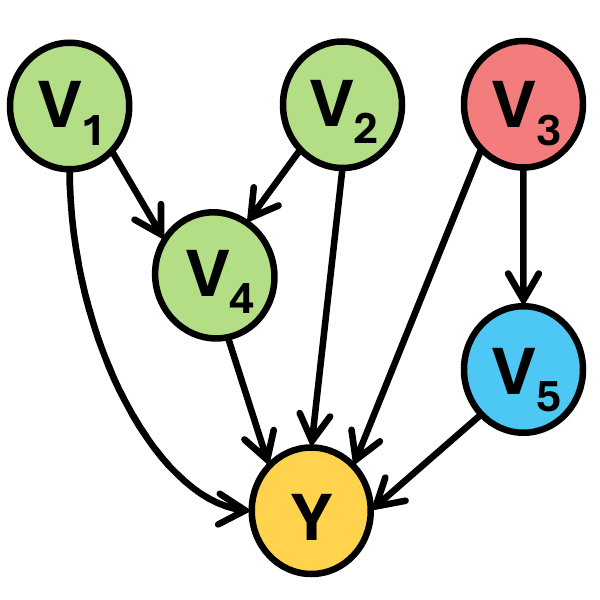}
      \caption{Ground-truth.}
      \label{fig: recovery_ori}
    \end{subfigure} \hfill
    \begin{subfigure}{0.13\linewidth}
      \includegraphics[width=\linewidth]{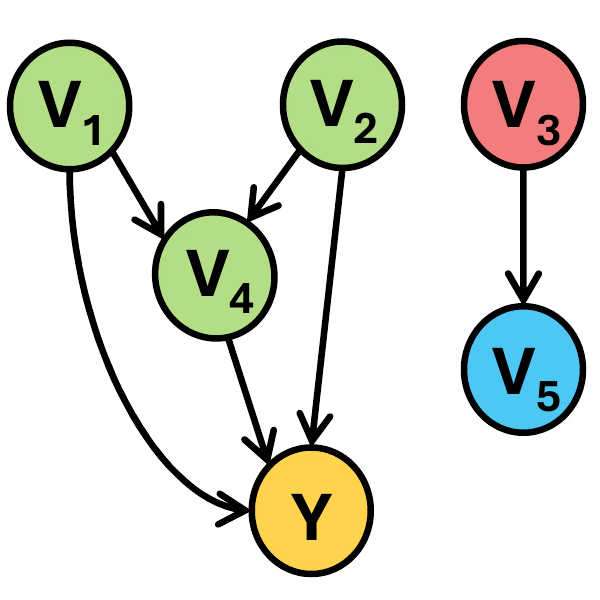}
      \caption{\FairRep.}
      \label{fig: recovery_3_lazy}
    \end{subfigure} \hfill
    \begin{subfigure}{0.13\linewidth}
      \includegraphics[width=\linewidth]{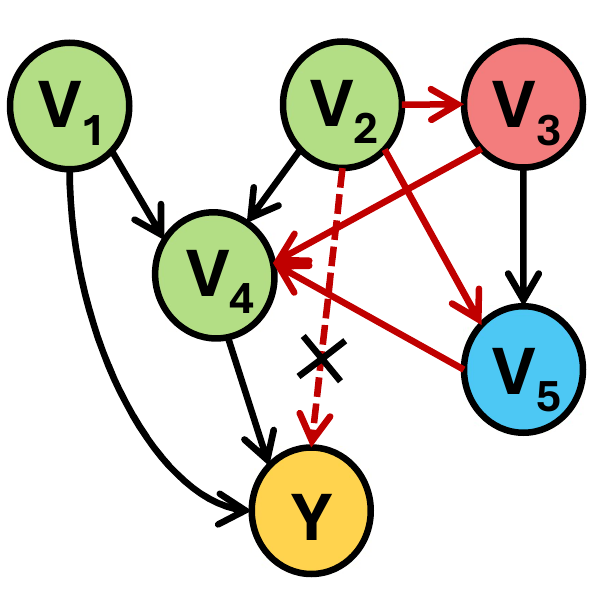}
      \caption{Cap-MF.}
      \label{fig: recovery_3_mf}
    \end{subfigure} \hfill
    \begin{subfigure}{0.13\linewidth}
      \includegraphics[width=\linewidth]{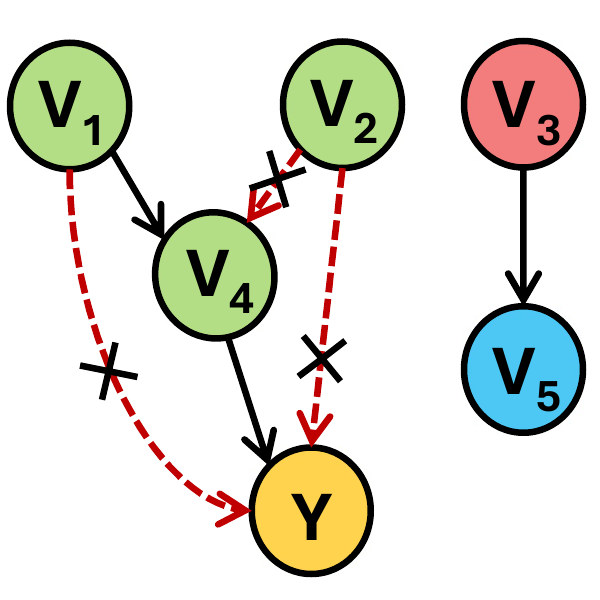}
      \caption{Cap-MS.}
      \label{fig: recovery_3_ms}
    \end{subfigure} \hfill
    \begin{subfigure}{0.13\linewidth}
      \includegraphics[width=\linewidth]{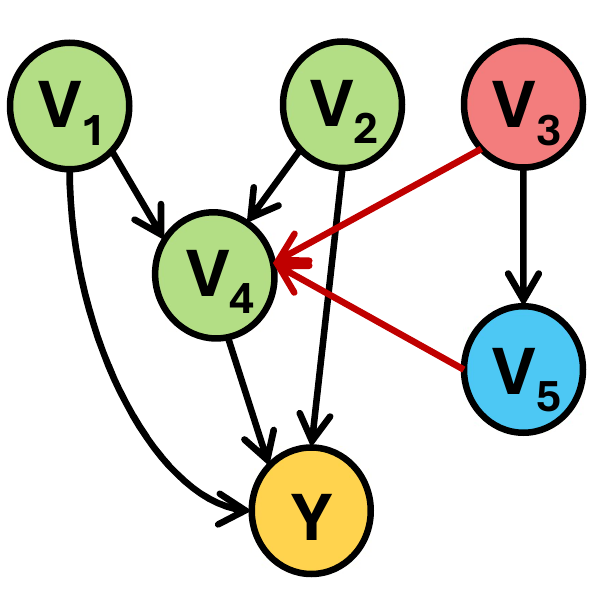}
      \caption{OTClean.}
      \label{fig: recovery_3_ot}
    \end{subfigure} \hfill
    \begin{subfigure}{0.13\linewidth}
      \includegraphics[width=\linewidth]{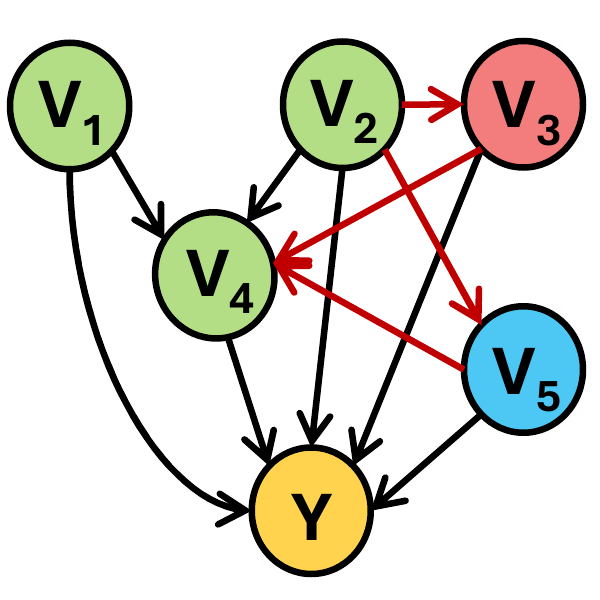}
      \caption{OTClean-RT.}
      \label{fig: recovery_3_otrt}
    \end{subfigure}
    \caption{DAG recovery: consider $V_3$ as sensitive attribute, and $V_5$ as inadmissible attribute.}
    \label{fig: recovery}
\end{figure*}

\begin{figure*}[htbp]
    \centering
    \begin{minipage}{0.265\textwidth}
        \centering
        \hspace*{-4mm}
        \includegraphics[width=\linewidth, keepaspectratio]{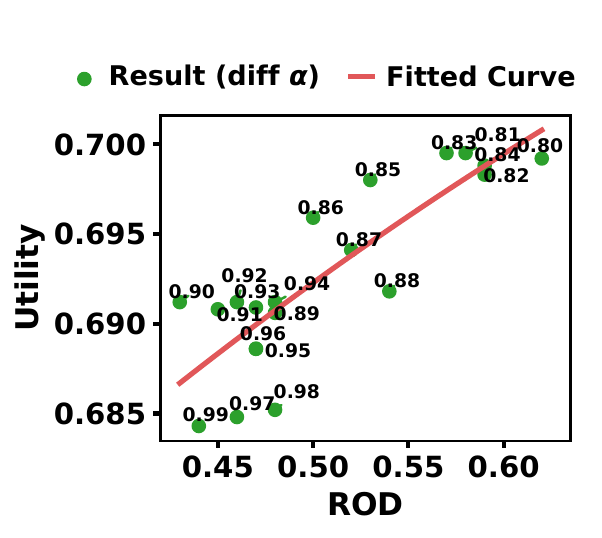}
        \caption{The utility-ROD trade-off of \FairRepAd\ on $\mathsf{COMPAS}$.}
        \label{fig: alpha}
    \end{minipage} \hfill
    \begin{minipage}{0.33\textwidth}
        \centering
        \includegraphics[width=\linewidth, keepaspectratio]{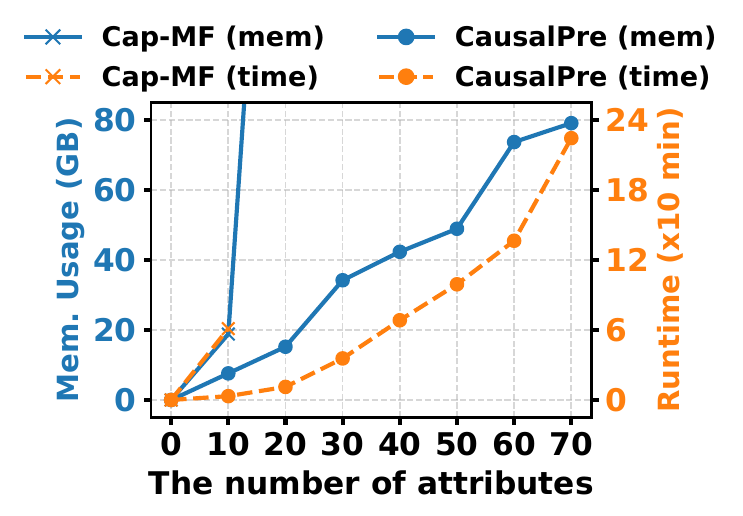}
        \caption{Memory usage and runtime under varying attribute dimensionalities.}
        \label{fig: scale_mem}
    \end{minipage} \hfill
    \begin{minipage}{0.31\textwidth}
        \centering
        \includegraphics[width=\linewidth, keepaspectratio]{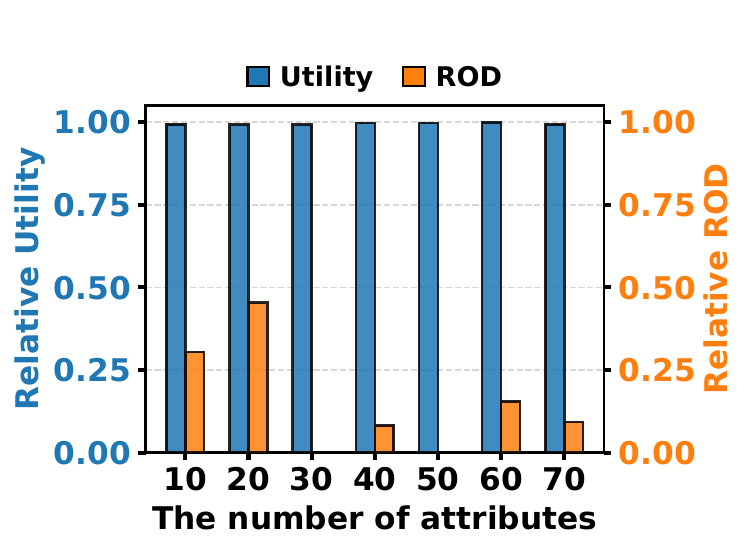}
        \caption{End-to-end performance under varying attribute dimensionalities.}
        \label{fig: scale_auc}
    \end{minipage}
\end{figure*}

\subsection{Evaluation of Statistical Distortion} \label{subsec: exp-stat}

We evaluate how well different causally fair pre-processing methods preserve the statistical structure of the original data by computing the KL-Divergence between the original and processed datasets. Smaller values indicate lower distortion and better distributional fidelity.

Figure~\ref{fig: kl} reports the results on the three real-world datasets, with some baselines excluded due to memory or runtime limits. The results reveal three key observations: (i) \FairRepAd\ consistently achieves lower KL-Divergence than \FairRep, even under a fairness-prioritized configuration with $\alpha{=}0.99$; (ii) \FairRep\ substantially outperforms Cap-MS and Cap-MF, especially on dataset $\mathsf{Adult}$, reducing KL-Divergence by over 50\% compared to Cap-MF and more than 70\% compared to Cap-MS; (iii) on smaller datasets such as $\mathsf{COMPAS}$, both \FairRepAd\ and OTClean-RT perform well. Overall, \FairRep\ demonstrates strong capability in preserving the original data distribution across diverse settings.

\subsection{Evaluation of Relationship Recovery} \label{subsec: exp-recovery}

To further measure how many original causal relationships are preserved or altered during data pre-processing, we conduct an experiment on a synthetic dataset with 6 attributes and 50,000 records, whose ground-truth attribute graph is shown in Figure~\ref{fig: recovery_ori}. In this setup, attributes are randomly assigned to roles (e.g., sensitive or inadmissible), and \FairRep\ along with baseline approaches are applied to generate processed datasets. For each dataset, we infer the underlying DAG using the Python libraries \textit{pgmpy.estimators} and \textit{networkx}, and compare the recovered structure with the ground truth to evaluate the preservation of causal relationships. Note that we keep the attribute space deliberately small to ensure that DAG recovery remains computationally feasible.

Figures~\ref{fig: recovery_3_lazy}--\ref{fig: recovery_3_otrt} present representative results, where $V_3$ is designated as sensitive, $V_5$ as inadmissible, $Y$ as the label, and the remaining variables as admissible or additional. More results under various role assignments are provided in Appendix~\ref{appendix: dag_recovery}. In the DAGs, black edges denote preserved relationships, red edges indicate spurious ones, and dashed red edges with a cross denote missing ones. OTClean-RT follows a different processing criterion, prohibiting only edges between sensitive and inadmissible attributes; its DAGs are marked accordingly. Overall, the results demonstrate that \FairRep\ is the only approach that exhibits strong fairness-aware processing: it preserves all causally fair relationships while eliminating unfair ones. In addition, it is the only method that remains valid across all cases while maintaining competitive performance in terms of both ROD and utility, as confirmed by our broader evaluation.

\subsection{Scalability Analysis} \label{subsec: exp-scalability} 

To assess scalability, we benchmark \FairRep\ and baseline methods on synthetic datasets containing 60 million records and 10--70 attributes, where each attribute has a domain size of 3--14. Figure~\ref{fig: scale_mem} presents the memory usage and runtime of \FairRep\ and Cap-MF under varying attribute dimensionalities. Other baselines could not complete even the smallest dataset and are therefore omitted from this analysis. As shown, both memory usage and runtime for \FairRep\ increase steadily with the number of attributes. In contrast, Cap-MF exhibits exponential growth in both metrics and exhausts memory limits at 20 attributes. To further assess the end-to-end performance of \FairRep\ on high-dimensional data, we evaluate its utility and ROD on these synthetic datasets, with results shown in Figure~\ref{fig: scale_auc}. Since the datasets with different numbers of attributes are independently generated, we report relative utility and relative ROD, which are computed by normalizing against the corresponding values from ``Original''. Cap-MF is excluded from this analysis as it does not scale effectively and cannot be applied to most of the evaluated cases. The results indicate that \FairRep\ preserves more than 99\% of the original utility while reducing discriminatory effects by over 50\% across all datasets through fairness-aware pre-processing.

\balance
\section{Related Work} \label{sec: related}

Our work is most closely related to two recent efforts on causal fairness through data pre-processing: Capuchin~\cite{salimi2019interventional} and OTClean~\cite{pirhadi2024otclean}. Capuchin enforces conditional independence (CI) constraints as sufficient conditions for justifiable fairness, avoiding the need for a complete causal model. It provides two methods: Cap-MS, which reduces CI enforcement to a multi-valued dependency repair problem solved via weighted Max-SAT, and Cap-MF, which formulates pre-processing as a matrix factorization problem that approximately satisfies CI constraints. OTClean extends this idea with an optimal transport framework that integrates CI constraints while minimizing divergence between original and processed datasets, thereby improving utility preservation. Both Capuchin and OTClean, however, focus solely on CI enforcement, which can cause significant utility loss and scalability challenges, as discussed in Section~\ref{sec: intro} and demonstrated empirically in Section \ref{sec: eval}.

Another line of work includes PreFair~\cite{pujol2023prefair}, FairExp~\cite{salazar2021automated}, and SeqSel~\cite{galhotra2022causal}. PreFair also adopts causal fairness but tackles a different problem with distinct theoretical foundations. Specifically, it does not explicitly model causality; instead, it incorporates fairness constraints into the process of private data synthesis. In contrast, our work is fundamentally guided by causality. The causal intuition, along with a detailed analysis of how the problem is reformulated and why this reformulation is justified, forms the foundation of the proposed \FairRep\ framework. FairExp and SeqSel likewise address causal fairness through pre-processing but focus specifically on fair feature selection, rather than data adjustment.

Beyond causal fairness, several pre-processing methods target associational or individual fairness~\cite{feldman2015certifying, nabi2018fair, zemel2013learning, xu2018fairgan, kamiran2012data, xiong2024fairwasp, calmon2017optimized, gordaliza2019obtaining, zhang2023iflipper, lahoti2019ifair}. These approaches reweight instances, modify labels, or transform feature spaces to reduce discrimination. While effective at mitigating observed disparities, they do not account for underlying causal relationships and thus cannot guarantee causal fairness, such as justifiable fairness~\cite{salimi2019interventional}. As such, these methods are complementary but orthogonal to our work.

\section{Conclusion} \label{sec: conclu}

In this paper, we presented \FairRep, a causality-guided framework for fairness-aware data pre-processing. \FairRep\ infers causally fair relationships directly from data and uses them to guide the pre-processing steps, removing unfair influences while preserving valid dependencies among attributes. This allows the data to be reconstructed as if it were generated in a hypothetical fair world, without requiring access to a predefined causal structure. Experimental results show that \FairRep\ is the only method that consistently handles a wide range of scenarios both effectively and efficiently.

\clearpage

\bibliographystyle{IEEEtran}
\bibliography{main}

\appendix

\subsection{Difference Between Theorem~\ref{theorem: justifiable} and Corollary~\ref{cor: pre-justifiable-fairness}} \label{appendix: diff-thm-cor}

Theorem~\ref{theorem: justifiable} and Corollary~\ref{cor: pre-justifiable-fairness} differ in the objects they describe. Theorem~\ref{theorem: justifiable} characterizes classifiers, whereas Corollary~\ref{cor: pre-justifiable-fairness} characterizes training data under the assumption of a reasonable classifier. Specifically, since justifiable fairness is fundamentally a property of classifiers, Theorem~\ref{theorem: justifiable} specifies the conditions under which a classifier can be considered justifiably fair. By contrast, Corollary~\ref{cor: pre-justifiable-fairness} shifts the focus from classifiers to data, which is more directly relevant to the problem studied in this paper. Under the assumption of a reasonable classifier, namely one that can accurately learn the distribution of its training data, it characterizes what kinds of training data will yield a justifiably fair classifier.

\subsection{Proof of Proposition \ref{prop: fair}} \label{appendix: fair}

\begin{proof}
    The proof follows a line of reasoning similar to that in~\cite{salimi2019interventional}, but extends it by introducing an additional attribute category, termed ``additional'' and denoted by $\mathcal{W}$. For any superset $\mathcal{K} \supseteq \mathcal{A}$, intervening on $\mathcal{K}$, i.e., performing $\text{do}(\mathcal{K} = \text{\scriptsize$\mathcal{K}$})$, blocks all causal paths that pass through any attribute in $\mathcal{K}$. By definition, attributes in $\mathcal{W}$ are independent of the fairness constraints; therefore, no causal path exists from any sensitive attribute in $\mathcal{S}$ to any attribute in $\mathcal{W}$. Consequently, this intervention blocks all paths from any sensitive attribute in $\mathcal{S}$ to any label attribute in $\mathcal{Y}$. Therefore, intervening on $\mathcal{S}$ does not affect the distribution of $\mathcal{Y}$.
\end{proof}

\subsection{Proof of Theorem \ref{thm:np-hardness}} \label{appendix: np-hardness}

\begin{proof}
    We prove NP-hardness by constructing a polynomial-time reduction from the classical \emph{Exact Cover by 3-Sets (X3C)} problem, which is known to be NP-complete~\cite{garey1979computers}. Given an instance of X3C, we construct an instance of the constrained clique partitioning problem such that solving the latter would yield a solution to the former.

    An instance of X3C is defined as follows: let \( U {=} \{u_1, u_2, \dots, u_{3q} \} \) be a ground set of size \( 3q \), and let \( \mathcal{S} {=} \{ S_1, S_2, \dots, S_t \} \) be a collection of subsets of \( U \), where each \( S_i \subseteq U \) and \( |S_i| {=} 3 \). The goal is to determine whether there exists a subcollection \( \mathcal{S}' \subseteq \mathcal{S} \) such that \( |\mathcal{S}'| {=} q \) and every element \( u \in U \) appears in exactly one set in \( \mathcal{S}' \).

    We reduce this instance to the clique partitioning problem as follows. Let the vertex set be \( \mathcal{X} {=} U \), so that \( |\mathcal{X}| {=} 3q \). Construct a fully connected undirected graph \( \mathcal{G}_U {=} \langle \mathcal{X}, \mathcal{E}, w \rangle \), where \( \mathcal{E} {=} \{\langle X_i, X_j\rangle \mid X_i, X_j \in \mathcal{X}, i<j\} \), and the weight function \( w : \mathcal{E} \rightarrow \{0, 1\} \) is defined as:
    \[
        w(u, v) = 
        \begin{cases}
        1 & \text{if } \exists S_i \in \mathcal{S} \text{ such that } \{u, v\} \subseteq S_i, \\
        0 & \text{otherwise}.
        \end{cases}
    \]

    For each subset \( S_i \in \mathcal{S} \), define a candidate clique \( \mathcal{M}_i {=} S_i \). Set the clique size upper bound \( k + m {=} 3 \), and let the required overlap size be \( m {=} 0 \), ensuring that all cliques are disjoint. Let the target weight be \( W^* {=} 3q \), the maximum possible total weight obtainable by selecting \( q \) disjoint cliques, each of size 3 and forming a triangle of total weight 3.

    We now show that the X3C instance is a \textsc{Yes}-instance if and only if there exists a feasible clique set \( \mathscr{C} {=} \{ \mathcal{C}_{i_1}, \dots, \mathcal{C}_{i_q} \} \) in our problem satisfying all constraints and achieving total weight at least \( W^* \).

    \smallskip
    \noindent\textbf{Sufficiency (\( \Rightarrow \)).}  
    Suppose that there exists an exact cover \( \mathcal{S}' \subseteq \mathcal{S} \) with size \( |\mathcal{S}'| {=} q \). We construct \( \mathscr{C} \) by selecting the corresponding cliques \( \mathcal{C}_i = S_i \in \mathcal{S}' \). These cliques are pairwise disjoint, and their union covers all nodes in \( \mathcal{X} {=} U \), thus satisfying the coverage constraint. Each clique contributes a weight of 3, thus the total weight is \( 3q {=} W^* \). Since \( m {=} 0 \), the overlap constraint is satisfied trivially, and the overlap graph is edgeless, hence acyclic.
    
    \smallskip
    \noindent\textbf{Necessity (\( \Leftarrow \)).}  
    Suppose that there exists a clique set \( \mathscr{C} {=} \{ \mathcal{C}_{i_1}, \dots, \mathcal{C}_{i_q} \} \) satisfying the size, coverage, overlap, and acyclicity constraints, with total weight at least \( W^* {=} 3q \). Since each edge in the weight function contributes at most 1, and the maximum weight per triangle is 3, achieving total weight \( 3q \) implies that each selected clique must contain exactly 3 nodes and induce a complete subgraph of 3 edges. Moreover, the overlap constraint with \( m{=}0 \) ensures that the cliques are disjoint. Hence, the selected cliques correspond to an exact cover of \( U \).
    
    \smallskip
    The reduction is clearly computable in polynomial time. Thus, the decision version of the constrained clique partitioning problem is NP-hard. Since the optimization version generalizes this decision problem, it follows that the optimization problem is NP-hard as well.
\end{proof}


\begin{figure*}[htbp]
    \centering
    \begin{subfigure}{0.14\linewidth}
      \includegraphics[width=\linewidth]{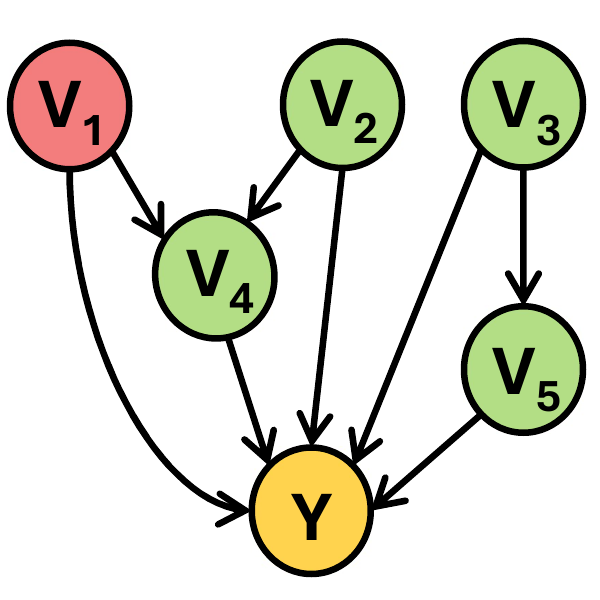}
      \caption{Ground-truth.}
      \label{fig: recovery_1_ori}
    \end{subfigure} \hfill
    \begin{subfigure}{0.14\linewidth}
      \includegraphics[width=\linewidth]{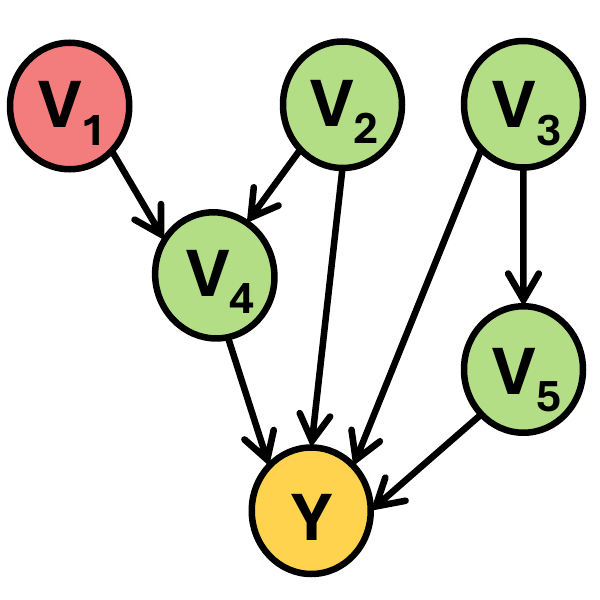}
      \caption{\FairRep.}
      \label{fig: recovery_1_lazy}
    \end{subfigure} \hfill
    \begin{subfigure}{0.14\linewidth}
      \includegraphics[width=\linewidth]{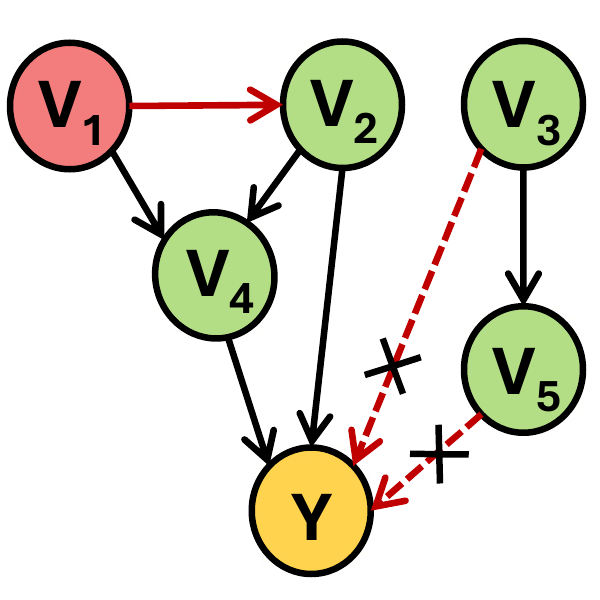}
      \caption{Cap-MF.}
      \label{fig: recovery_1_mf}
    \end{subfigure} \hfill
    \begin{subfigure}{0.14\linewidth}
      \includegraphics[width=\linewidth]{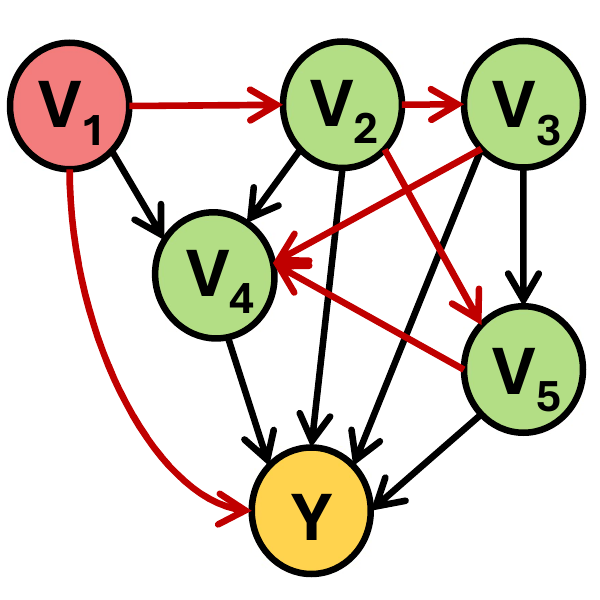}
      \caption{Cap-MS.}
      \label{fig: recovery_1_ms}
    \end{subfigure} \hfill
    \begin{subfigure}{0.14\linewidth}
      \includegraphics[width=\linewidth]{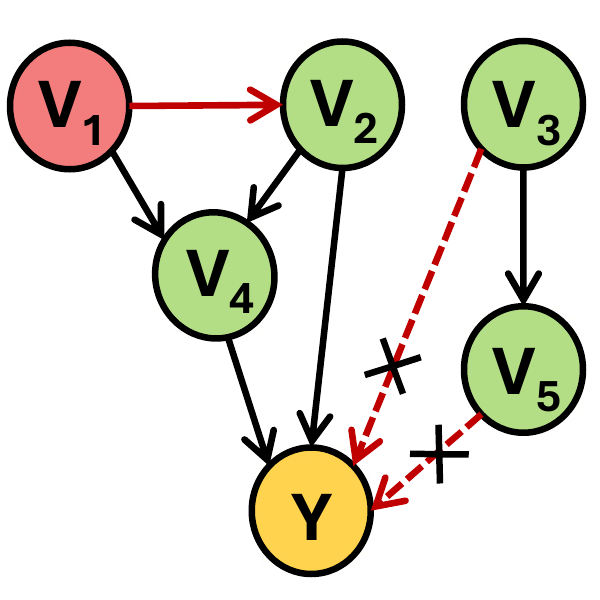}
      \caption{OTClean.}
      \label{fig: recovery_1_ot}
    \end{subfigure} \hfill
    \begin{subfigure}{0.14\linewidth}
      \includegraphics[width=\linewidth]{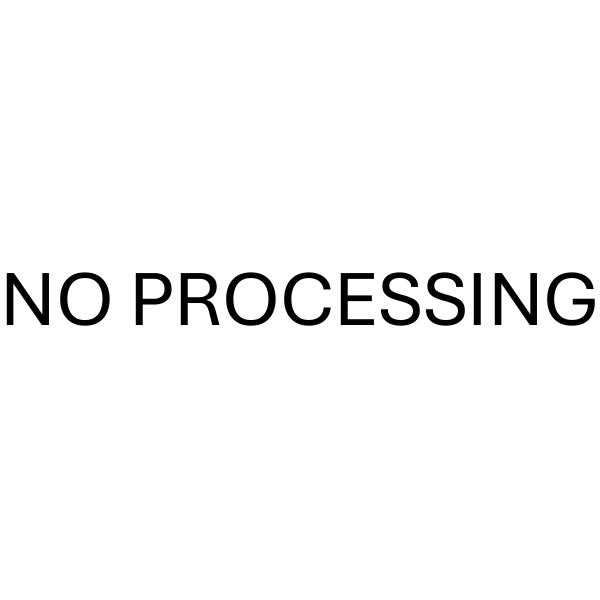}
      \caption{OTClean-RT.}
      \label{fig: recovery_1_otrt}
    \end{subfigure}
    \caption{DAG recovery: consider $V_1$ as sensitive attribute.}
    \label{fig: recovery_1}
\end{figure*}

\begin{figure*}[htbp]
    \centering
    \begin{subfigure}{0.14\linewidth}
      \includegraphics[width=\linewidth]{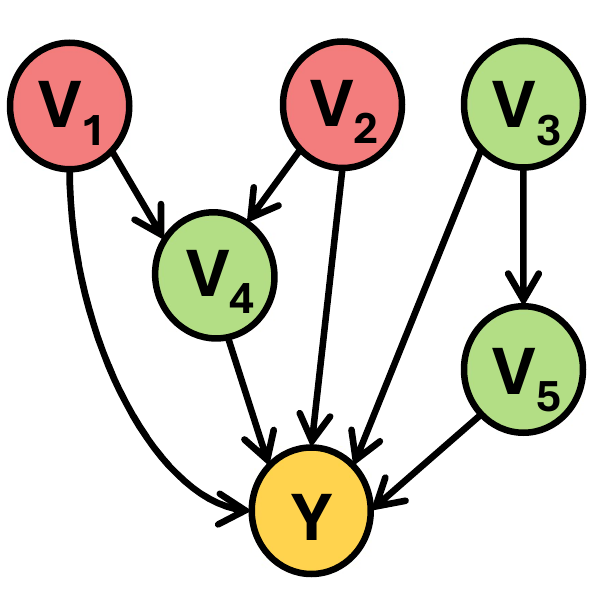}
      \caption{Ground-truth.}
      \label{fig: recovery_2_ori}
    \end{subfigure} \hfill
    \begin{subfigure}{0.14\linewidth}
      \includegraphics[width=\linewidth]{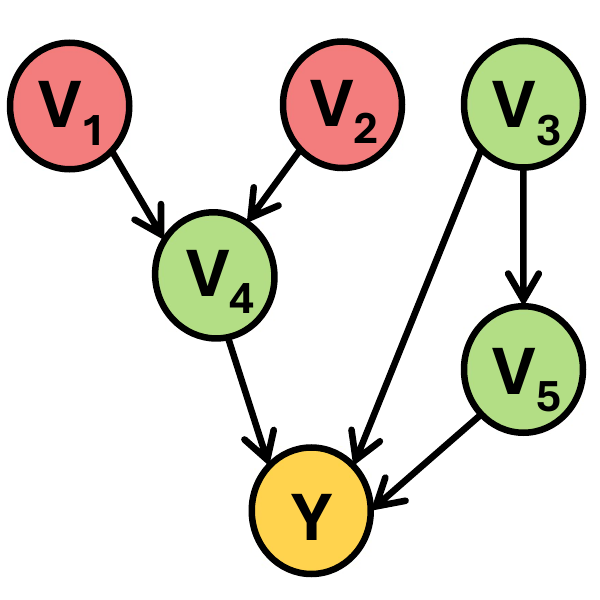}
      \caption{\FairRep.}
      \label{fig: recovery_2_lazy}
    \end{subfigure} \hfill
    \begin{subfigure}{0.14\linewidth}
      \includegraphics[width=\linewidth]{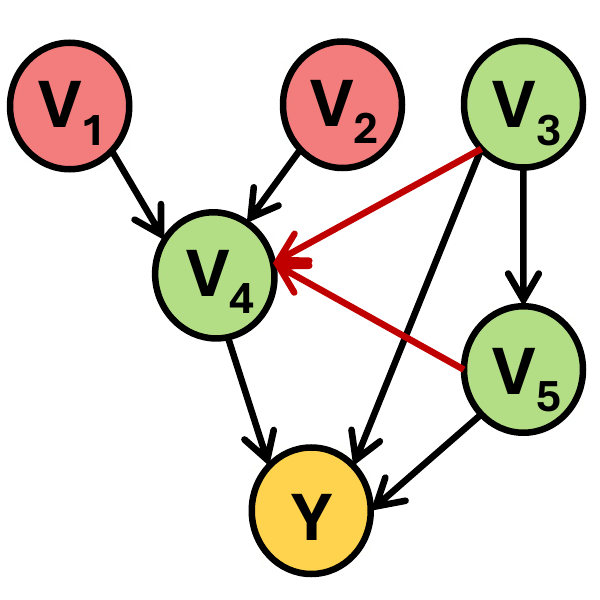}
      \caption{Cap-MF.}
      \label{fig: recovery_2_mf}
    \end{subfigure} \hfill
    \begin{subfigure}{0.14\linewidth}
      \includegraphics[width=\linewidth]{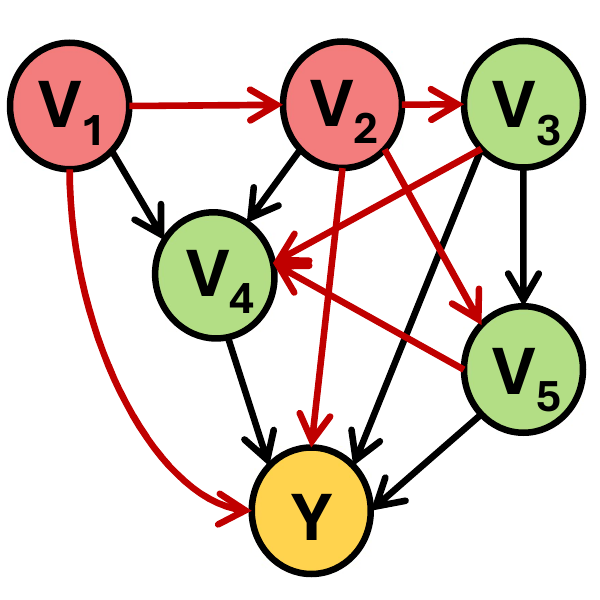}
      \caption{Cap-MS.}
      \label{fig: recovery_2_ms}
    \end{subfigure} \hfill
    \begin{subfigure}{0.14\linewidth}
      \includegraphics[width=\linewidth]{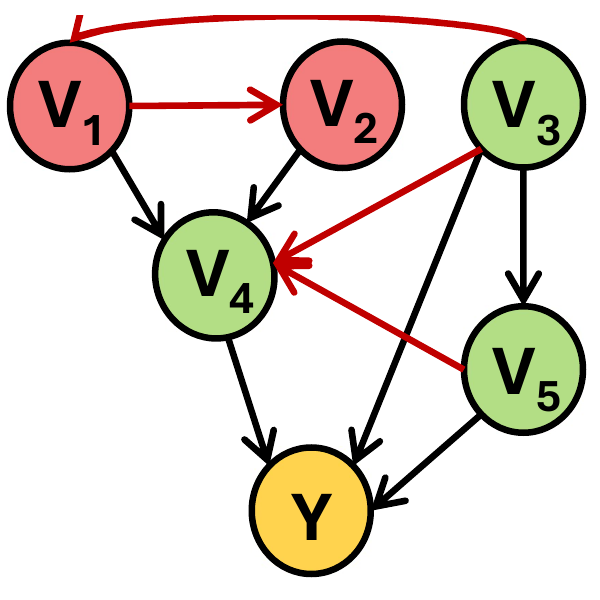}
      \caption{OTClean.}
      \label{fig: recovery_2_ot}
    \end{subfigure} \hfill
    \begin{subfigure}{0.14\linewidth}
      \includegraphics[width=\linewidth]{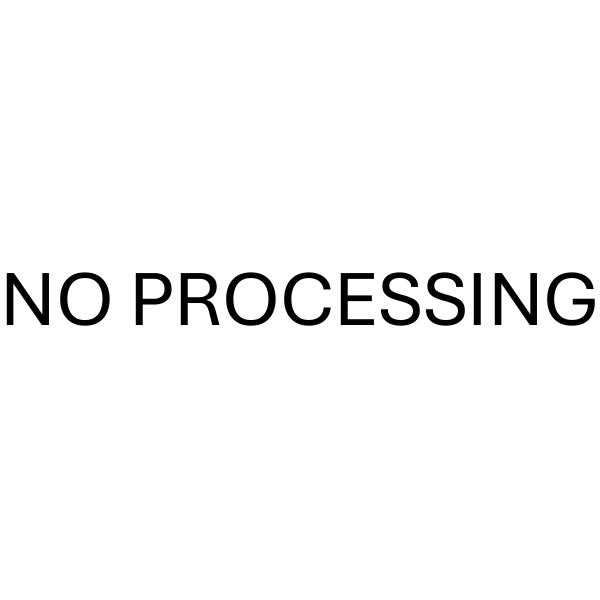}
      \caption{OTClean-RT.}
      \label{fig: recovery_2_otrt}
    \end{subfigure}
    \caption{DAG recovery: consider $V_1$ and $V_2$ as sensitive attributes.}
    \label{fig: recovery_2}
\end{figure*}

\begin{figure*}[htbp]
    \centering
    \begin{subfigure}{0.14\linewidth}
      \includegraphics[width=\linewidth]{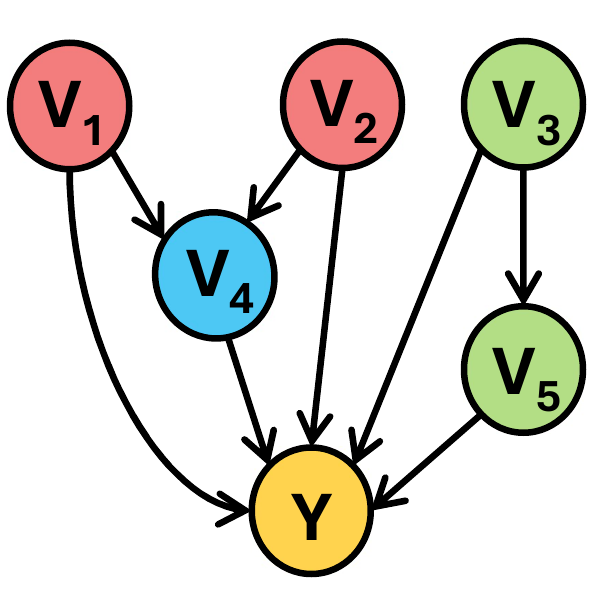}
      \caption{Ground-truth.}
      \label{fig: recovery_4_ori}
    \end{subfigure} \hfill
    \begin{subfigure}{0.14\linewidth}
      \includegraphics[width=\linewidth]{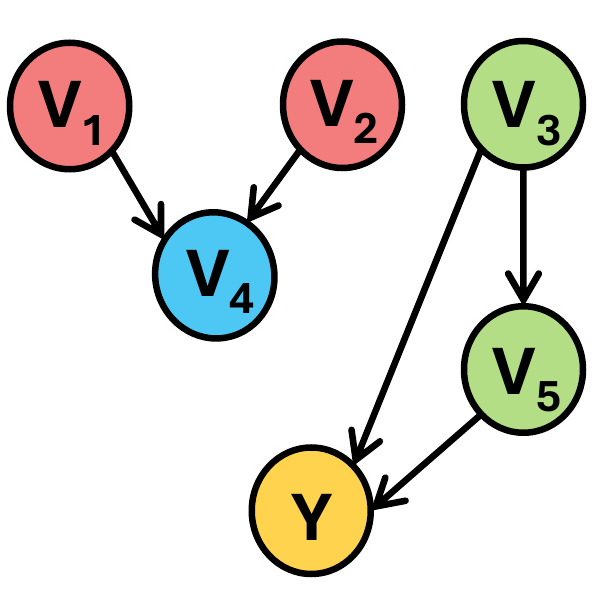}
      \caption{\FairRep.}
      \label{fig: recovery_4_lazy}
    \end{subfigure} \hfill
    \begin{subfigure}{0.14\linewidth}
      \includegraphics[width=\linewidth]{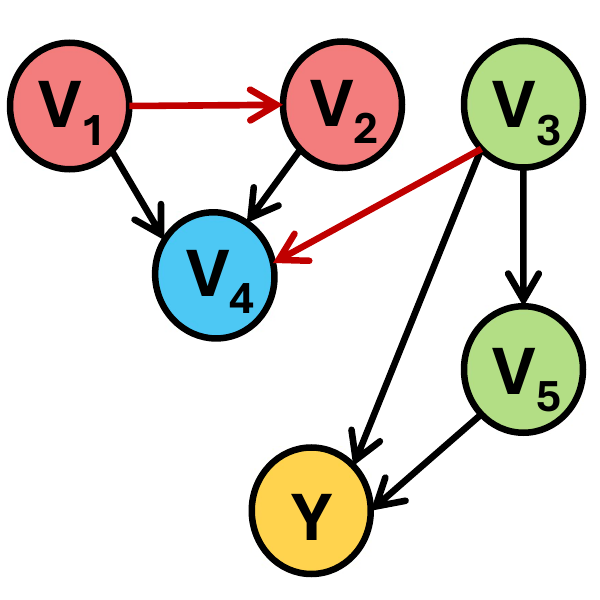}
      \caption{Cap-MF.}
      \label{fig: recovery_4_mf}
    \end{subfigure} \hfill
    \begin{subfigure}{0.14\linewidth}
      \includegraphics[width=\linewidth]{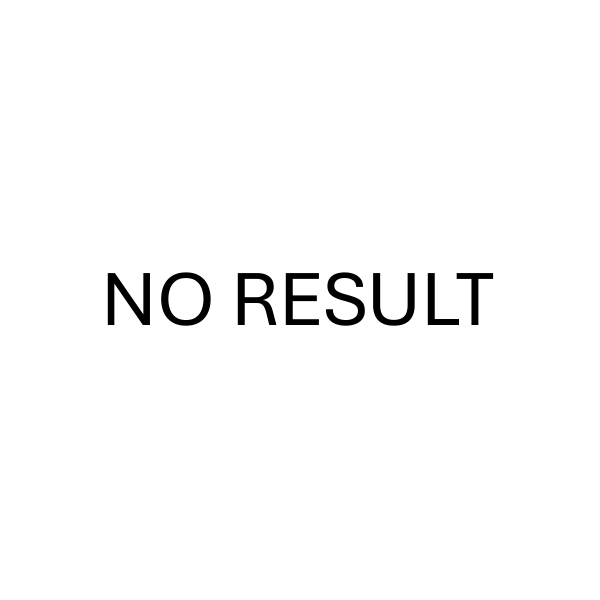}
      \caption{Cap-MS.}
      \label{fig: recovery_4_ms}
    \end{subfigure} \hfill
    \begin{subfigure}{0.14\linewidth}
      \includegraphics[width=\linewidth]{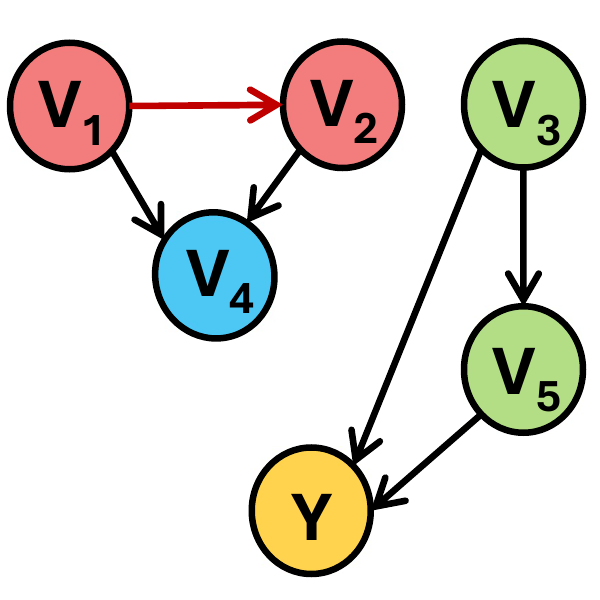}
      \caption{OTClean.}
      \label{fig: recovery_4_ot}
    \end{subfigure} \hfill
    \begin{subfigure}{0.14\linewidth}
      \includegraphics[width=\linewidth]{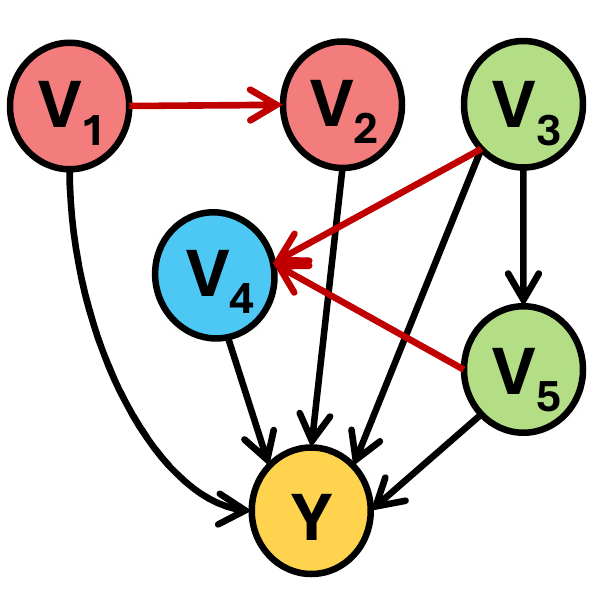}
      \caption{OTClean-RT.}
      \label{fig: recovery_4_otrt}
    \end{subfigure}
    \caption{DAG recovery: consider $V_1$ and $V_2$ as sensitive attributes, and $V_4$ as inadmissible attribute.}
    \label{fig: recovery_4}
\end{figure*}


\subsection{Details of baseline implementation} \label{appendix: code_extension}

\noindent\textbf{Cap-MS and Cap-MF.}
The original implementations of Cap-MS and Cap-MF are designed to enforce saturated conditional independence constraints, i.e., constraints that involve the full set of attributes. To adapt them for the unsaturated setting considered in this paper, we extend their functionality to support partial repair. Specifically, we first use their source code to repair the partial database $\mathcal{D}_{\mathcal{V}\setminus\mathcal{W}}$, focusing only on the attributes from $\mathcal{V}\setminus\mathcal{W}$, which involve only saturated constraints. After obtaining the repaired partial database $\mathcal{D}'_{\mathcal{V}\setminus\mathcal{W}}$, we compute the distribution of the repaired full database as $$\mathbb{P}\left[\mathcal{D}'\right]=\mathbb{P}\left[\mathcal{W}\mid\mathcal{V}\setminus\mathcal{W}\right]\cdot\mathbb{P}\left[\mathcal{D}'_{\mathcal{V}\setminus\mathcal{W}}\right],$$ where the conditional distribution $\mathbb{P}\left[\mathcal{W}\mid\mathcal{V}\setminus\mathcal{W}\right]$ is derived from the original database $\mathcal{D}$. Finally, we sample the repaired full database from the computed distribution $\mathbb{P}\left[\mathcal{D}'\right]$.

\vspace{2mm}
\noindent\textbf{OTClean and OTClean-RT.}
We include both OTClean-RT and OTClean in our experiment for fair comparison. \textit{OTClean-RT} exactly follows the original implementation from~\cite{pirhadi2024otclean}; it processes the training data to enforce the constraint $\mathcal{S} \perp \mathcal{I} \mid \mathcal{A}$, remembers modification patterns, and excludes attributes in $\mathcal{S}$ during classifier training. During prediction, it first adjusts the testing data based on the modification patterns and then makes predictions using the processed testing data. \textit{OTClean} retains the original optimal transport framework, but with the enforced constraint changed to $\mathcal{S}\mathcal{I} \perp Y \mid \mathcal{A}$ (the same setting used for Cap-MS and Cap-MF in~\cite{salimi2019interventional}). This adjustment is necessary for a fair comparison for two reasons: (i) the original constraint inherently requires testing data modification, which differs from the setting considered in this paper; and (ii) the revised constraint allows us to follow the original framework as closely as possible while keeping the testing data unmodified. With this change, the OTClean variant processes only the training data, trains on all attributes, and makes predictions on the unmodified testing data, consistent with the other methods evaluated in this paper.

\subsection{Addtional Experimental Results on Relationship Recovery} \label{appendix: dag_recovery}

Section~\ref{subsec: exp-recovery} reports results for the case with one sensitive attribute and one inadmissible attribute. For completeness, we also present results for three additional scenarios: (i) one sensitive attribute, (ii) two sensitive attributes, and (iii) two sensitive attributes with one inadmissible attribute. In each scenario, the designated attributes are randomly selected, and for simplicity, all remaining attributes (except the label) are treated as admissible. These results are shown in Figures~\ref{fig: recovery_1},~\ref{fig: recovery_2}, and~\ref{fig: recovery_4}, respectively. OTClean-RT is skipped in Figures~\ref{fig: recovery_1_otrt} and \ref{fig: recovery_2_otrt} because it applies only to scenarios where both sensitive and inadmissible attributes are present. Cap-MS is skipped in Figure~\ref{fig: recovery_4_ms} because it does not complete dataset processing within five hours.

Overall, \FairRep\ is the only approach that consistently preserves all causally fair relationships while effectively eliminating unfair ones across all scenarios. This directly demonstrates its strong ability to fix biased data without distorting valid fair relationships. In contrast, the baseline solutions frequently disrupt the original structure by aggressively eliminating legitimate causal edges, introducing spurious relationships, or even both.

\end{document}